\newcommand{\w}{\mathbf{w}}
\newcommand{\x}{\mathbf{x}}
\newcommand{\y}{\mathbf{y}}
\newcommand{\z}{\mathbf{z}}
\newcommand{\A}{\mathbf{A}}
\newcommand{\B}{\mathbf{B}}
\newcommand{\C}{\mathbf{C}}
\def\I{\mathbf{I}}
\newcommand{\Y}{\mathbf{Y}}
\newcommand{\X}{\mathbf{X}}
\newcommand{\Z}{\mathbf{Z}}
\newcommand{\W}{\mathbf{W}}
\newcommand{\U}{\mathbf{U}}
\newcommand{\V}{\mathbf{V}}
\newcommand{\Q}{\mathbf{Q}}
\newcommand{\D}{\mathbf{D}}
\def\red#1{\textcolor{red}{#1}}
\def\blue#1{\textcolor{blue}{#1}}
\newcommand{\HH}{\mathbf{H}}
\newcommand{\Lambdaa}{\mathbf{\Lambda}}
\long\def\comment#1{}
\newcommand{\tr}{\mathrm{tr}}
\journalname{International Journal of Computer Vision}
\begin{document}

\title{Multi-label Learning with Missing Labels using Mixed Dependency Graphs \thanks{This work is supported by Tencent AI Lab. The participation of Bernard Ghanem is supported by the King Abdullah University of Science and Technology (KAUST) Office of Sponsored Research.  
The participation of Siwei Lyu is partially supported by National Science Foundation National Robotics Initiative (NRI) Grant (IIS-1537257) and National Science Foundation of China Project Number 61771341.
}
}


\author{Baoyuan Wu  \and
        Fan Jia \and 
        Wei Liu \and 
        Bernard Ghanem \and 
        Siwei Lyu}


\institute{Baoyuan Wu \at
              Tencent AI Lab, Shenzhen 518000, China  \\
              \email{wubaoyuan1987@gmail.com}           
           \and
           Fan Jia \and Wei Liu \at
              Tencent AI Lab, Shenzhen 518000, China 
           \and
           Bernard Ghanem \at KAUST, Thuwal 23955, Saudi Arabia
           \and
           Siwei Lyu \at 
           SUNY-Albany, 1400 Washington Ave, Albany, NY 12222, USA
}


\maketitle

\begin{abstract}
This work focuses on the problem of multi-label learning with missing labels (MLML), which aims to label each test instance with multiple class labels given training instances that have an incomplete/partial set of these labels (i.e. some of their labels are missing). The key point to handle missing labels is propagating the label information from the provided labels to missing labels, through a dependency graph that each label of each instance is treated as a node. 
We build this graph by utilizing different types of label dependencies. 
Specifically, the instance-level similarity is served as undirected edges to connect the label nodes across different instances and the semantic label hierarchy is used as directed edges to connect different classes.
This base graph is referred to as the mixed dependency graph, as it includes both undirected and directed edges. 
Furthermore, we present another two types of label dependencies to connect the label nodes across different classes. 
One is the class co-occurrence, which is also encoded as undirected edges. Combining with the above base graph, we obtain a new mixed graph, called MG-CO (mixed graph with co-occurrence). 
The other is the sparse and low rank decomposition of the whole label matrix, to embed high-order dependencies over all labels. 
Combining with the base graph, the new mixed graph is called as MG-SL (mixed graph with sparse and low rank decomposition). 
Based on MG-CO and MG-SL, we further propose two convex transductive formulations of the MLML problem, denoted as MLMG-CO and MLMG-SL respectively. 
In both formulations, the instance-level similarity is embedded through a quadratic smoothness term, while the semantic label hierarchy is used as a linear constraint. In MLMG-CO, the class co-occurrence is also formulated as a quadratic smoothness term, while the sparse and low rank decomposition is incorporated into MLMG-SL, through two additional matrices (one is assumed as sparse, and the other is assumed as low rank) and an equivalence constraint between the summation of this two matrices and the original label matrix. 
Interestingly, two important applications, including image annotation and tag based image retrieval, can be jointly handled using our proposed methods. 
Experimental results on several benchmark datasets show that our methods lead to significant improvements in performance and robustness to missing labels over the state-of-the-art methods.
\keywords{Multi-label Learning \and Missing Labels \and Mixed Dependency Graphs \and Image Annotation \and Image Retrieval}
\end{abstract}

\section{Introduction}
\label{sec: 1 introduction}

\begin{figure*}[phtb]
\centering
\includegraphics[width=1\textwidth, height=3.15in]{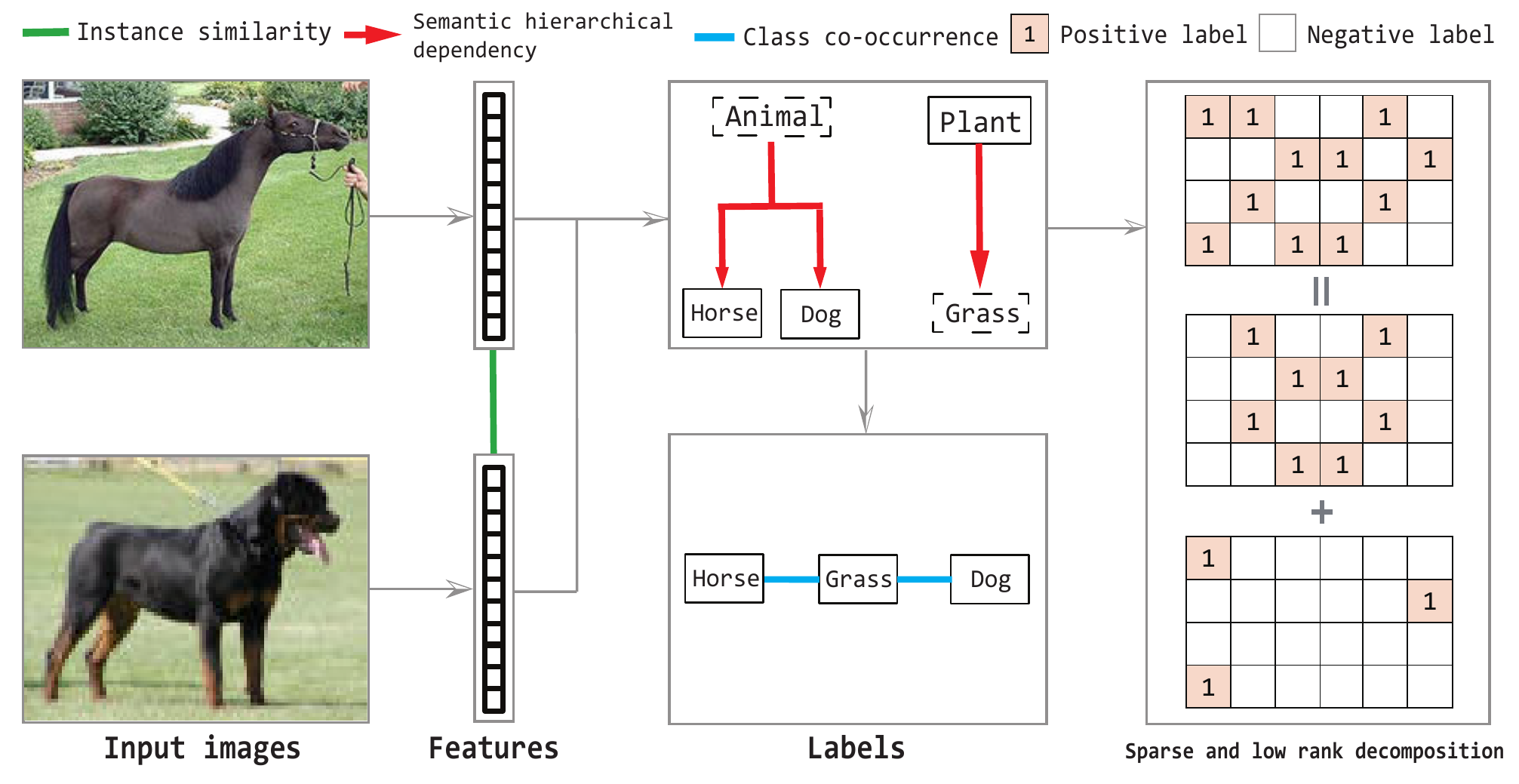}
\caption{
    The left column includes two example images from the ESP Game \cite{esp-game-2004} dataset, and their corresponding features and labels are shown in other columns.
    The solid box denotes a provided label, while the dashed box indicates a missing label.
The red (semantic hierarchical dependency), green (instance similarity),  and blue (class co-occurrence) edges constitute the mixed graph with co-occurrence (MG-CO); 
The red, green edges and the sparse and low rank decomposition of the whole label matrix constitute the mixed graph with sparse and low rank decomposition (MG-SL).
   }
\label{fig1: motivation}
\end{figure*}

In machine learning, multi-label learning refers to the setting where each data item can be associated to multiple classes simultaneously.
For example, in image annotation, an image can be annotated using several tags; in document topic analysis, a document can be associated with multiple topics. 
Although there are several multi-label learning methods in the literature \cite{mlknn-pr-2007}\cite{multilabel-review-tkde-2014}, most of these require complete labelling of training examples, {\it i.e.}, for every pair of training example and class label, their association needs to be provided.

However, complete labelling is usually infeasible in practice. Most training instances are only partially labelled, with some or all of the labels not provided/missing. Let us consider the task of large-scale image annotation, where the number of classes/tags is large ({\it e.g.}, using labels of ImageNet \cite{imagenet-cvpr-2009}). Practically, a human annotator can only consider to annotate each training image with a subset of a potentially large and diverse set of tags. Furthermore, in many cases, due to the semantic similarities in the tags, some tags are typically left unchecked, {\it e.g.}, an image tagged with ``German Shepherd" may usually not be tagged also with ``Dog".
Such a learning setting is referred to as the { \it multi-label learning with missing labels} (MLML) problem \cite{my-icpr-2014,LEML-ICML-2014}.

As labels are usually related by semantic meanings or co-occurrences,
the key to learning from missing labels is a
good model to represent label dependency. One widely used model for label dependency is an undirected graph, through which the label information can be propagated among different instances and among different classes.
For example, the label dependency between a pair of labels, such as instance similarity and class co-occurrence can be represented using such a graph (see green and blue edges in Fig. \ref{fig1: motivation}).
However, as stated in \cite{my-icpr-2014,my-pr-2015}, the class co-occurrence derived from training labels can be inaccurate and biased when many missing labels exist. 
One alleviation method is to estimate co-occurrence relations from an auxiliary and possibly more comprehensive source (such as Wikipedia) \cite{crbm-mlml-2015}. 
Another alternative is utilizing a class dependency that is independent of the provided labels. One widely used dependency in multi-label learning is the low rank assumption that the rank of the label matrix, where one row corresponds to one class, and each column indicates one instance, should be smaller than the number of rows ({\it i.e.}, classes). Although this assumption has been successfully used in many multi-label models \cite{multilabel-compressed-sensing-nips-2012,LEML-ICML-2014}, as indicated in \cite{multilabel-low-rank-sparse-kdd-2016}, the low rank assumption is difficult to be fully satisfied due to the existence of tail labels ({\it i.e.}, the rare labels that occur in very few instances, thus they are difficult to be represented by the linear combinations of other labels). 
Instead, the sparse and low-rank decomposition that has been successfully used in other applications like image alignment \cite{image-alignment-pami-2012} or visual tracking \cite{tianzhu-sparse-coding-eccv-2012} can be used in multi-label learning, to assume that the label matrix can be decomposed to the addition of one sparse and one low-rank matrices.  
Compared to the pure low rank assumption, this decomposition is more flexible to ensure the validity of the low rank assumption in practical multi-label problems. 
In this work we propose to combine the instance-level similarity with the class co-occurrence, or the sparse and low rank decomposition respectively.

The semantic dependency between two classes, such as ``animal$\rightarrow$horse" and ``plant$\rightarrow$grass" as shown in Fig. \ref{fig1: motivation}, can foster further label dependencies and improve label predictions in the test. To handle this requirement, a new set of constraints is introduced to require that {\it the label score ({\it e.g.}, the presence probability) of the parent class cannot be lower than that of its child class}. This is traditionally referred to as the {\bf semantic hierarchical constraint} \cite{bi-wei-icml-2011,my-iccv-2015}. The undirected graph (with instance similarity and class co-occurrence edges or the global sparse and low rank decomposition) cannot guarantee that the final label predictions will satisfy all semantic hierarchy constraints. To address this problem, we add semantic dependencies into the graph as directed edges, thus, resulting in an overall mixed dependency graph that encourages (or enforces) three types of label dependencies. The graph embedding the class co-occurrence is referred to as {\it mixed graph with co-occurrence} (MG-CO), while the one with the sparse and low rank decomposition is denoted as  {\it mixed graph with sparse and low rank decomposition} (MG-SL). 
Please refer to Fig. \ref{fig1: motivation} for an example of these models.

The goal of this work is to learn from partially labeled training instances and to correctly predict the labels of testing instances that satisfy the semantic hierarchical constraints.
Motivated by \cite{my-icpr-2014,my-pr-2015}, a discrete objective function is formulated to simultaneously encourage consistency between predicted and ground truth labels and encode traditional label dependencies (instance similarity with class co-occurrence or with sparse and low rank decomposition). Whereas, semantic hierarchical constraints are incorporated as hard linear constraints in the matrix optimization. The discrete problem is further relaxed to a convex problem, which is solved using ADMM \cite{admm-boyd-2011}.

\vspace{-6pt}\paragraph{Contributions:} \textbf{(1)} We address the MLML problem by using a mixed dependency graph to encode a network of label dependencies: instance similarity, class co-occurrence or sparse and low rank decomposition, as well as semantic hierarchical constraint. \textbf{(2)} Learning on the mixed dependency graph is formulated as a linearly constrained convex matrix optimization problem that is amenable to efficient solvers. \textbf{(3)} We conduct extensive experiments on the task of image annotation to show the superiority of our method in comparison to the state-of-the-art. \textbf{(4)} We augment labelling of several widely used datasets, including Corel 5k \cite{corel5k-eccv-2002}, ESP Game \cite{esp-game-2004}, IAPRTC-12 \cite{iaprtc-12-data-2006} and MediaMill \cite{mediamill-data-2006}, with a  semantic hierarchy drawn from Wordnet \cite{wordnet-1998}. This ground truth augmentation will be made publicly available to enable further researches on the MLML problem in computer vision.

Compared to the previous conference version of this work \cite{my-iccv-2015}, the additional novelties in this manuscript are threefold.  
\textbf{(1)} We adopt the CNN extracted features on ESP Game and IAPRTC-12 of which the original images are available, and the experimental performances are significantly improved compared to the one using traditional features. 
\textbf{(2)} The sparse and low rank decomposition is utilized to provide an alternative to the class co-occurrence, leading to further performance improvements.
\textbf{(3)} More detailed experimental comparisons are provided to evaluate the influences of different label dependencies. 
\textbf{(4)} The experimental results of image retrieval are added.

\section{Related Work} 
\label{sec: 2 related work}

In the literature of multi-label learning, the previous works that are designed to handle missing labels can be generally partitioned into four categories.
First, the missing labels are directly treated as negative labels,  including \cite{semi-multi-label-sdm-2008,well-multi-label-weak-2010,bucak-multi-incomplete-2011,fasttag-icml-2013,Agrawal-ml-million-label-www-2013,hash-multi-label-eccv-2014a,hash-multi-label-eccv-2014b,multilabel-link-prediction-aaai2015}.
Common to these methods is that the label bias is brought into the objective function. 
As a result, their performance is greatly affected when massive ground-truth positive labels are initialized as negative labels.
Second, filling in missing labels is treated as a matrix completion (MC) problem, including \cite{MC-nips-2010,MC-Pos-nips-2011,MC-speed-nips-2013}.
The recent LEML method \cite{LEML-ICML-2014} cast the MLML problem into the empirical risk minimization (ERM) framework.
Both MC models and LEML are based on the low rank assumption of the whole label matrix. 
In contrast, the sparse and low rank decomposition is introduced to multi-label learning in a recent work \cite{multilabel-low-rank-sparse-kdd-2016}.
Third, missing labels are treated as latent variables in probabilistic models, including the model based on Bayesian networks 
\cite{multilabel-compressed-sensing-nips-2012,bml-cs-active-kdd-2014} 
and conditional restricted Boltzmann machines (CR-BM).  
Last, Wu et al. \cite{my-icpr-2014} defined three label states, including positive labels $+1$, negative labels $-1$ and missing labels $0$, to avoid the label bias. 
 However, the two solutions proposed in \cite{my-icpr-2014} involves matrix inversion, which limits the scalability to handle larger datasets.
Wu et al. \cite{my-pr-2015} proposed an inductive model based on the framework of regularized logistic regression. It also adopts three label states and a hinge loss function to avoid the label bias. However, the classifier parameters corresponding to each class have to be learned sequentially. 
Furthermore, the computational cost of this method increases significantly with the number of classes, thus, this method becomes prohibitive for very large datasets.

Hierarchical multi-label learning (HML) \cite{ML-reivew-2014} has been applied to problems where the label hierarchy exists, such as image annotation \cite{hierarchy-image-annotation-review-pr-2012}, text classification \cite{hml-text-icml-2005,kernel-hml-text-jmlr-2006} and protein function prediction.
\cite{bi-wei-icml-2011,yu-incomplete-hierarchy-bmc-2015}.
Except for a few cases, most existing HML methods only consider the learning problem of complete hierarchical labels. However, in real problems, the incomplete hierarchical labels commonly occur, such as in image annotation. 
Yu et al. \cite{yu-incomplete-hierarchy-bmc-2015} recently proposed a method to handle the incomplete hierarchical labels. 
However, the semantic hierarchy and the multi-label learning are used separately, such that the semantic hierarchical constraint can not be fully satisfied. 
Deng et al. \cite{deng-eccv-2014} developed a CRF model for object classification. The semantic hierarchical constraint and missing labels are also incorporated into this model. However, a significant difference is that \cite{deng-eccv-2014} focuses on a single object in each instance, while there are multiple object in each instance in our problem.

In the application of image annotation, both missing labels and semantic hierarchy have been explored in many previous works, such as 
\cite{well-multi-label-weak-2010,bucak-multi-incomplete-2011,fasttag-icml-2013,tag-completion-pami-2013,image-tag-missing-cvpr-2013,video-annotation-icm-2008,L1-label-denoising-bmvc-2016,my-aaai-2016-imbalance,li-au-missing-pr-2016} (missing labels) and \cite{hierarchy-image-annotation-review-pr-2012,my-cvpr-2017-dia,my-cvpr-2018-d2ia-gan} (semantic hierarchy).  
However, to the best of our knowledge, no previous work in image annotation has extensively studied missing labels and semantic hierarchy simultaneously. 
Note that the semantic hierarchical constraint used in our model is similar to the ranking constraint 
\cite{ML-calibrated-ranking-2008,bucak-multi-incomplete-2011}
 that is widely used in multi-label ranking models, but there are significant differences. First, the ranking constraint used in these models means the predicted value of the provided positive label should be larger than that of the provided negative label, while the semantic hierarchical constraint involves the ranking of the predicted values between a pair of parent and classes. Besides, the ranking constraint is always incorporated as the loss function, while the semantic hierarchical constraint is formulated as the linear constraint in our model.

\section{Problem and Model}
\label{sec: 3 model}

\subsection{Problem Definition}
\label{sec: 3 subsec problem definition}
Our method takes as input two matrices: a data matrix $\mathbf{X} = [\x_1,\cdots,\x_n] \in \mathbb{R}^{d \times n}$, which aggregates the $d$-dimensional feature vectors of all $n$ (training and testing) instances, and a label matrix $\mathbf{Y} = [\y_1,\cdots,\y_n] \in \{0,\frac{1}{2},+1\}^{m \times n}$, which aggregates the $m$-dimensional label vectors of all instances. That is to say each instance $\x_i$ can take one or more labels from the  $m$ different classes $\{c_1,\ldots,c_m\}$. Its corresponding label vector $\y_i=\mathbf{Y}_{.i}$ determines its membership to each of these classes. For example, if $\mathbf{Y}_{ji} = +1$, then $\x_i$ is a member of $c_j$ and if $\mathbf{Y}_{ji} = 0$, then $\x_i$ is not a member of this class. However, if  $\mathbf{Y}_{ji} = \frac{1}{2}$, then the membership of $\x_i$ to $c_j$ is considered unknown ({\it i.e.}, it has a missing label). Correspondingly, all $m$ labels of each testing instance $\x_k$ are missing, {\it i.e.}, $\y_k=\frac{1}{2}\mathbf{1}$.
The semantic hierarchy is encoded as another matrix: $\mathbf{\Phi} = [\boldsymbol{\phi}_1, \ldots, \boldsymbol{\phi}_{n_e}] \in \mathbb{R}^{m \times n_e}$, with $n_e$ being the number of directed edges.
$\boldsymbol{\phi}_i = [0, \ldots, 1, \ldots, -1, \ldots, 0]^\top$ denotes the index vector of the $i$-th directed edge (see Fig. \ref{fig1: motivation}),
with $\boldsymbol{\phi}_i(i_{parent}) =1$ and $\boldsymbol{\phi}_i(i_{child}) = -1$, while all other entries are 0.

Our goal is to obtain a complete label matrix $\mathbf{Z} \in \{+1,$ $0\}^{m \times n}$ that satisfies the following properties.
\begin{enumerate}
\item 
$\mathbf{Z}$ is consistent with the provided (not missing) labels in $\mathbf{Y}$, {\it i.e.}, $\mathbf{Z}_{ij} = \mathbf{Y}_{ij}$ if $\mathbf{Y}_{ij} \neq \frac{1}{2}$. 
\item 
$\mathbf{Z}$ satisfies the instance-level label similarity. It assumes that $\x_i$ and $\x_j$ have similar features, then their corresponding predicted labels ({\it i.e.}, the $i^{th}$ and $j^{th}$ column of $\mathbf{Z}$) should be similar. 
\item 
$\mathbf{Z}$ follows the class-level label similarity. It assumes that if the co-occurrence between two classes is high, then they will be likely to co-exist at many instances, {\it i.e.}, the corresponding two row vectors of $\mathbf{Z}$ are similar.
\item 
$\mathbf{Z}$ can be decomposed as the sum of a sparse matrix and a low rank matrix, {\it i.e.}, $\Z = \HH_0 + \HH_1$ with $\HH_0$ being low rank and $\HH_1$ being sparse. The rationale of the low rank assumption is that one class could be represented by its related classes. However, due to the existence of tailed labels, the low rank assumption is unlikely to be exactly satisfied. Thus, the sparse matrix is introduced to include the tailed labels, then the remaining label matrix could be low rank. 
\item 
$\mathbf{Z}$ is consistent with the semantic hierarchy $\mathbf{\Phi}$. To enforce this, we ensure that if $c_a$ is the parent of $c_b$, a hard constraint is applied, which guarantees that the score (the presence probability) of $c_a$ should not be smaller than the score of $c_b$. This constraint ensures that the final predicted labels are consistent with the semantic hierarchical constraint.
\end{enumerate} 

Note that both criteria (3) and (4) embed the class-level label dependencies, with (3) being pairwise while (4) being high-order. 
We propose two models to combine (1,2,3,5) and (1,2,4,5) respectively. Note that we can utilize both criteria (3) and (4) to construct a more general model, but to evaluate their different effects, in this manuscript we evaluate two models separately. 
By jointly incorporating all four criteria in model 1 or 2, the label information is propagated from provided labels to the missing labels. 
In what follows, we give a detailed exposition of how these criteria can be mathematically encoded in one unified optimization framework.

\subsection{Label Consistency}
\label{sec: 3 subsec label consistency}

The label consistency of $\mathbf{Z}$ with $\mathbf{Y}$ is enforced using 
\begin{flalign}
\sum_{i,j}^{n, m} \overline{\Y}_{ij} ( \Y_{ij} - \Z_{ij} ) = \text{const} - \tr(\overline{\Y}^\top \Z ),
\label{eq: loss function}
\end{flalign}
where $\text{const} = \tr(\overline{\Y}^\top \Y)$, and 
$\overline{\Y}$ is defined as $\overline{\Y}_{ij} = ( 2 \Y_{ij}-1 ) * \tau_{ij}$, with $\tau_{ij}$ being a penalty factor mismatches between $\mathbf{Y}_{ij}$ and $\mathbf{Z}_{ij}$. We set $\tau_{ij}$ in the following manner. If $\mathbf{Y}_{ij} = 0$, then $\tau_{ij} = r_- >0$, if $\mathbf{Y}_{ij} = +1$, then $\tau_{ij} = r_+ > r_-$, and if $\mathbf{Y}_{ij} = \frac{1}{2}$, then $\tau_{ij} = 0$. 
That is to say a higher penalty is incurred if a ground truth label is $+1$ but is predicted as $0$, as compared to the reverse case. 
This idea reflects the observation that most entries of $\mathbf{Y}$ in many multi-label datasets (with a relatively large number of classes) are $0$ and that $+1$ labels are rare (see the data statistics in Table \ref{table: dataset}). Of course, missing labels are not penalized.

\subsection{Instance-level Label Dependency}
\label{sec: 3 subsec instance level smooth}

Similar to \cite{my-icpr-2014,my-pr-2015}, we incorporate the instance-level label similarity ({\it i.e.}, criteria (2)) using the regularization term in Eq. (\ref{eq: instance-level dependency}).
\begin{flalign}
\vspace{-0.1in}
 \label{eq: instance-level dependency}
\tr(\Z \mathbf{L}_\X \Z^\top) = \hspace{-0.3em}
    \sum_{k,i,j}^{m, n, n} \frac{\W_{\mathbf{X}}(i,j)}{2}  \left[
  \frac{ \Z_{ki}}{\sqrt{\mathbf{d}_{\X}(i)}} -\frac{\Z_{kj}}{\sqrt{\mathbf{d}_{\X}(j)}} \right]^2,
\end{flalign}
where the instance similarity matrix $\mathbf{W}_\X$ is defined as: $\W_{\X}$
 $(i,j)$ $ = \exp{(-\frac{\|\x_{i} - \x_{j}\|^2}{\varepsilon_{i}\varepsilon_{j}}})$. The kernel size $\varepsilon_{i}= \|\x_{i} -\x_{h}\|_2$ and $\x_{h}$ is the $h$-th nearest neighbour of $\x_{i}$ (measured by the Euclidean distance).
Similar to \cite{my-icpr-2014}, we set $h=7$.
The normalization term $\mathbf{d}_{\X}(i)=\sum_{j}^n \W_\X(i,j)$ makes the regularization term invariant to different scaling factors of elements in $\W_{\X}$ \cite{spectral-tutorial-2007}. The normalized Laplacian matrix is $\mathbf{L}_\X=\mathbf{I}-\D^{-\frac{1}{2}}_\X \W_{\X}\D^{-\frac{1}{2}}_\X$ with $\D_\X=\text{diag}\big(\mathbf{d}_{\X}(1),\cdots,$ $\mathbf{d}_{\X}(n)\big)$.

\vspace{4pt}
\subsection{Class-level Label Dependency}\label{sec: 3 subsec class level smooth}

Here, we consider three types of class-level label dependencies, namely class co-occurrence, sparse and low rank decomposition and semantic hierarchy.

\vspace{0.5em} \noindent {\bf Class co-occurrence:}
This dependency is encoded using the regularization term in Eq. (\ref{eq: class-level depedencny}).
\begin{flalign}
\tr(\Z^\top \mathbf{L}_\C \Z)  = \hspace{-0.3em}
 \sum_{k, i,j}^{n, m, m} \frac{\W_{\C}(i,j)}{2}\left[\frac{\Z_{ik}}{\sqrt{\mathbf{d}_{\C}(i)}}-\frac{\Z_{jk}}{\sqrt{\mathbf{d}_{\C}(j)}}\right]^2.
 \label{eq: class-level depedencny}
\end{flalign}
Here, we define the class similarity matrix $\mathbf{W}_\C$ as: $\W_\C(i,j)$ $= \frac{<\Y_{i\cdot},\Y_{j\cdot}>}{\|\Y_{i\cdot}\|\cdot \|\Y_{j\cdot}\|}, ~\forall i \neq j$  and $\W_{\C}(i,i) = 0$. The normalized Laplacian matrix is defined as
$\mathbf{L}_\C=\mathbf{I}-\D^{-\frac{1}{2}}_\C \W_\C \D^{-\frac{1}{2}}_\C$ with   $\D_\C=\text{diag}\big(\mathbf{d}_{\C}(1),\cdots,\mathbf{d}_{\C}(m)\big)$.

\vspace{0.5em} 
\noindent {\bf Sparse and low rank decomposition:}
The sparse and low rank decomposition assumes that the label matrix $\Z$ can be decomposed to the addition of a sparse matrix $\HH_1$ and a low rank matrix $\HH_0$, as follows, 
\begin{eqnarray}
\min_{\HH_0, \HH_1} \gamma_0 \text{rank}(\HH_0) + \gamma_1 \parallel \HH_1 \parallel_{1,1}, ~ \text{s.t.} ~ \Z = \HH_0 + \HH_1.
\label{eq: original sparse and low rank}
\end{eqnarray}
However, it is known that the minimization of $\text{rank}(\HH_0)$ is intractable in general \cite{nuclear-norm-2010}. A widely used solution to minimize its convex approximation \cite{nuclear-norm-low-rank-2002}, {\it i.e.}, the nuclear norm $\parallel \HH_0 \parallel_* = \sum_{i=1}^{\min\{m,n\}}$ $\sigma_i(\HH_0)$, with $\sigma_i(\HH_0)$ being the $i$-th singular value of $\HH_0$. Then the approximation of (\ref{eq: original sparse and low rank}) is formulated as 
\begin{eqnarray}
\min_{\HH_0, \HH_1} \gamma_0 \parallel \HH_0 \parallel_* + \gamma_1 \parallel \HH_1 \parallel_{1,1}, ~ \text{s.t.} ~ \Z = \HH_0 + \HH_1. 
\label{eq: sparse and nuclear norm}
\end{eqnarray}

\vspace{0.5em}  \noindent
{\bf Semantic hierarchical constraint:}
To enforce the semantic hierarchical constraint ({\it i.e.}, criteria (5)), we apply the following constraint: $\mathbf{Z}(i_{parent}, j) \geq \mathbf{Z}(i_{child}, j),~\forall i = 1, \ldots, n_e,  \forall j = 1, \ldots, n$.
The resulting constraints can be aggregated in matrix form,
\begin{eqnarray}
\vspace{-0.1in}
\mathbf{\Phi}^\top \mathbf{Z} \geq 0,
  \label{eq: semantic hierarchy constraint}
\vspace{-0.1in}
\end{eqnarray}
where $\mathbf{\Phi} = [\boldsymbol{\phi}_1, \ldots, \boldsymbol{\phi}_{n_e}] \in \mathbb{R}^{m \times n_e}$. $\boldsymbol{\phi}_i = [0, \ldots, 1, \ldots,$ $-1, \ldots, 0]^\top$ is the indicator vector of the $i$-th directed edge $e_i = (i_{parent} \rightarrow i_{child})$, with $\boldsymbol{\phi}_i(i_{parent}) =1$ and $\boldsymbol{\phi}_i(i_{child})$ $= -1$, with all other entries being 0.

\section{MLML using Mixed Dependency Graph with Co-occurrence (MLMG-CO)}
\label{sec: 3 subsec objective function}

By combining those four properties formulated in Eqs. (\ref{eq: loss function},\ref{eq: instance-level dependency},\ref{eq: class-level depedencny}, \ref{eq: semantic hierarchy constraint}), we construct a mixed dependency graph to connect all label nodes ({\it i.e.}, all entries in $\Z$), referred to as mixed dependency graph with co-occurrence (MG-CO).
Using MG-CO, 
we formulate the MLML problem as a binary matrix optimization problem, where the linear combination of Eqs. (\ref{eq: loss function},\ref{eq: instance-level dependency},\ref{eq: class-level depedencny}) forms the objective and Eq. (\ref{eq: semantic hierarchy constraint}) enforces the semantic hierarchical constraints.
\begin{flalign}
  \min_\Z & \quad -\tr( \overline{\Y}^\top \Z ) + \beta \tr(\Z \mathbf{L}_\X \Z^\top) + \gamma \tr(\Z^\top \mathbf{L}_\C \Z),  \nonumber
  \\
 \text{s.t.}   & \quad \Z\in \{0, 1\}^{m \times n},
  \quad \mathbf{\Phi}^\top \Z  \geq 0,
  \label{eq: obj of matrix based}
\end{flalign}
which is referred to as MLMG-CO. 
The three terms in the objective function correspond to Eqs. (\ref{eq: instance-level dependency},\ref{eq: class-level depedencny},\ref{eq: semantic hierarchy constraint}) respectively. 
Due to the binary constraint on $\mathbf{Z}$, it is difficult to efficiently solve this discrete problem. Thus, we use a conventional \emph{box} relaxation, which relaxes $\mathbf{Z}$ to take on values in $[0,1]^{m \times n}$. Since both $\mathbf{L}_\X$ and $\mathbf{L}_\C$ are positive semi-definite (PSD), it is easy to prove that the relaxed problem of Eq. (\ref{eq: obj of matrix based continuous}) is a convex quadratic problem (QP) with linear matrix constraints (refer to the \textbf{Appendix A} for the detailed proof of the convexity). 
\begin{flalign}
  \min_\Z & \quad -\tr( \overline{\Y}^\top \Z ) + \beta \tr(\Z \mathbf{L}_\X \Z^\top) + \gamma \tr(\Z^\top \mathbf{L}_\C \Z), \nonumber
  \\
 \text{s.t.}   & \quad \Z\in [0, 1]^{m \times n},
  \quad \mathbf{\Phi}^\top \Z  \geq 0.
  \label{eq: obj of matrix based continuous}
\end{flalign}
Due to its convexity and smoothness, the MLMG-CO problem can be efficiently solved by many solvers. In this work, we adopt the alternative direction of method of multipliers (ADMM) \cite{admm-boyd-2011}, which decomposes the optimization problem into several steps that are easy to implement and intuitive to understand.

\subsection{ADMM Algorithm for MLMG-CO}
\label{sec4: subsec ADMM for MLMG-CO}

Following the conventional ADMM framework \cite{admm-boyd-2011}, we firstly formulate the augmented Lagrange function of Problem (\ref{eq: obj of matrix based continuous}), by introducing a non-negative slack variable $\mathbf{Q}\in\mathbb{R}^{n_e \times n}$, 
\begin{flalign}
 & L_\rho(\Z, \Q, \mathbf{\Lambda}) = 
      \beta \tr(\Z \mathbf{L}_\X \Z^\top) + \gamma \tr(\Z^\top \mathbf{L}_\C \Z) -\tr( \overline{\Y}^\top \Z ) 
\nonumber 
 \\
     & + \tr[ \mathbf{\Lambda}^\top (\mathbf{\Phi}^\top \Z - \Q) ] + \frac{\rho}{2} ||(\mathbf{\Phi}^\top \Z - \Q)||_{F}^2, 
       \label{eq: obj of augmented Lagrange}
\end{flalign}
where $\mathbf{Z} \in [0,1]^{m \times n}$ and $\mathbf{Q} \geq 0$. Here, $\mathbf{\Lambda} \mathbf{\Phi} \in \mathbb{R}^{n_e \times n}$ is the Lagrange multiplier (dual variable), $\rho>0$ is a penalty parameter, and $|| \cdot ||_{F}$ denotes the matrix Frobenius norm. 
Then we want to solve the following problem
\begin{flalign}
  \min_{\Z, \Q} \max_{\mathbf{\Lambda}}  L_\rho(\Z, \Q, \mathbf{\Lambda}), 
 ~ \text{s.t.} ~ \mathbf{Z} \in [0,1]^{m \times n}, \mathbf{Q} \geq 0.
\end{flalign}
It can be minimized by alternatively solving the following  sub-problems, with $t$ being the iteration index of the ADMM algorithm.  

\vspace{0.5em}  \noindent
{\bf Sub-problem with respect to $\Z$}: The update of $\Z^{t+1}$ is obtained by the following sub-problem, 
\begin{flalign}
\Z_{t+1} & = \underset{\Z \in [0,1]^{m \times n}}{\arg \min} L_\rho(\Z, \Q_t, \mathbf{\Lambda}_t) 
\label{eq: update of Z in admm}
\\
& = \tr[ \overline{\A}^\top_t \Z ] + \tr[ \Z \overline{\mathbf{B}}_t \Z^\top ] + \tr[ \Z^\top \overline{\C}_t \Z ]  
\nonumber
\end{flalign}
where $\overline{\mathbf{A}}_t = -\overline{\Y} + \mathbf{\Phi} \mathbf{\Lambda}_t - \rho \mathbf{\Phi} \Q_t$, $\overline{\mathbf{B}}_t = \beta \mathbf{L}_\X$ and $\overline{\C}_t = \gamma \mathbf{L}_\C  + \frac{\rho}{2} \mathbf{\Phi} \mathbf{\Phi}^\top$. 
Clearly, $\mathbf{\Phi} \mathbf{\Phi}^\top$ is positive semi-definite (PSD), so $\overline{\C}_t$ is PSD. Considering that $\overline{\mathbf{B}}_t$ is also PSD, thus Problem (\ref{eq: update of Z in admm}) is a convex quadratic programming (QP) problem with box constraints. It can be efficiently solved using projected gradient descent (PGD) with exact line search \cite{boyd-convex-2004}. 

\vspace{0.3em}  \noindent
{\it Projected gradient descent}.
The gradient of the objective function (\ref{eq: update of Z in admm}) with respect to $\Z$ and the step size are computed as
\begin{flalign}
  \nabla \Z_k & = \overline{\mathbf{A}}_k + 2 \Z_k \overline{\mathbf{B}}_k + 2 \overline{\C}_k \Z_k, \\
  \eta_k & = \arg \min_{\eta > 0} L_\rho(\Z_k - \eta \nabla \Z_k, \Q_t, \mathbf{\Lambda}_t)
  \label{eq: step size alpha}
  \\
   & = \frac{ \frac{1}{2} \tr[\overline{\mathbf{A}}_k^\top \nabla \Z_k] + \tr[\Z_k \overline{\mathbf{B}}_k \nabla \Z_k^\top] + \tr[\nabla \Z_k^\top \overline{\C}_k  \Z_k]}{ \tr[\nabla \Z_k\overline{\mathbf{B}}_k \nabla \Z_k^\top] + \tr[\nabla \Z_k^\top \overline{\C}_k  \nabla \Z_k]}, \nonumber
\end{flalign}
where $k$ indicates the iteration index of PGD.
Then $\Z$ is updated as follows:
\begin{flalign}
  \Z_{k+1}  = \min( 1, \max (0, \Z_k - \eta_k \nabla \Z_k) ).
\end{flalign}
The result of the final iteration of PGD will be used as the solution to Problem (\ref{eq: update of Z in admm}), {\it i.e.}, $\Z_{t+1}$.
As Problem (\ref{eq: update of Z in admm}) is convex, PGD is guaranteed to converge to the global optimal solution. However, to reduce the computational cost, we stop this update step only after a few PGD iterations. This heuristic makes the convergence of the overall ADMM much faster, without any considerable effect on performance.

\vspace{0.5em}  \noindent
{\bf Sub-problems with respect to $\Q$ and $\mathbf{\Lambda}$:}
 The updates for $\mathbf{Q}_{t+1}$ and $\mathbf{\Lambda}_{t+1}$ are closed form,
\begin{flalign}
\Q_{t+1} & = \arg \underset{\Q \geq 0 }{\min} \quad L_\rho(\Z_{t+1}, \Q, \mathbf{\Lambda}_t) 
\label{eq: update Q in admm}
\\
& = \max(0, \mathbf{\Phi}^\top \Z_{t+1} + \frac{1}{\rho} \mathbf{\Lambda}^\top_t ) 
\nonumber
 \\
 ~\vspace{.2in}
\mathbf{\Lambda}_{t+1}  & = \mathbf{\Lambda}_t + \rho [\mathbf{\Phi}^\top \Z_{t+1} - \Q_{t+1} ].
~\vspace{.25in}
\label{eq: update lambda in admm}
\end{flalign}

According to the analysis in \cite{admm-proof-2013,admm-proof-2014}, the above ADMM algorithm is guaranteed to converge to the global minimum of Problem (\ref{eq: obj of matrix based continuous}).
Note that if without the semantic hierarchical constraints shown in (\ref{eq: semantic hierarchy constraint}), Problem (\ref{eq: obj of matrix based continuous}) can be more efficiently solved by the PGD algorithm, rather than by ADMM. 

\section{MLML using the Mixed Dependency Graph with Sparse and Low Rank Decomposition (MLMG-SL)}
\label{sec: 5 MLMG-SL}

In this section we propose another formulation of the MLML problem, based on the mixed dependency graph with sparse and low rank decomposition (MG-SL) constructed by Eqs. (\ref{eq: loss function},\ref{eq: instance-level dependency},\ref{eq: sparse and nuclear norm},\ref{eq: semantic hierarchy constraint}), as follows:
\begin{flalign}
 \min_{\Z,\HH_0, \HH_1} & \hspace{-0.2em} \beta\tr(\Z \mathbf{L}_{\X} \Z^\top) + \gamma_0\|\HH_0\|_* + \gamma_1\|\HH_1\|_{1,1} -\tr(\overline{\Y}^\top\Z) 
 \nonumber
\\
\text{s.t.} & ~~ \Z = \HH_0 + \HH_1, \mathbf{\Phi}^\top\Z \geq \mathbf{0}, \Z \in \{0, 1\}^{m\times n},
\label{eq: binary MLMG-SL}
\end{flalign}
which is referred to as MLMG-SL.
Similarly, the binary constraint $\{0, 1\}$ is also relaxed to the box constraint $[0, 1]$, then the relaxed continuous problem becomes 
\begin{flalign}
 \min_{\Z,\HH_0, \HH_1} & -[\alpha\tr(\overline{\Y}^\top\Z) + (1-\alpha)\tr(\overline{\Y}^\top(\HH_0+ \HH_1))] 
\label{eq: continuous MLMG-SL}
\\
 & + \beta\tr(\Z \mathbf{L}_{\X} \Z^\top) + \gamma_0\|\HH_0\|_* + \gamma_1\|\HH_1\|_{1,1}
\nonumber
\\
\text{s.t.} & ~ \Z = \HH_0 + \HH_1, \boldsymbol{\mathbf{\Phi}}^\top\Z \geq \mathbf{0}, \Z \in [0, 1]^{m\times n}. 
\nonumber
\end{flalign}
Note that we have adopted a new loss term in (\ref{eq: continuous MLMG-SL}) by introducing a trade-off parameter $\alpha \in [0,1]$. Due to the constraint $\Z = \HH_0 + \HH_1$, this new loss term is equivalent to the old loss term in (\ref{eq: binary MLMG-SL}). The benefit is the larger flexibility, leading to a more stable convergence in the optimization process. 
As demonstrated in \cite{image-alignment-pami-2012}, both $\|\HH_0\|_*$ and $\|\HH_1\|_{1,1}$ are convex. Considering the convex smoothness term $\tr(\Z \mathbf{L}_{\X} \Z^\top)$ and the linear constraints, the optimization problem in (\ref{eq: continuous MLMG-SL}) is also convex. 
We solve it again using the ADMM algorithm.

\subsection{ADMM Algorithm for MLMG-SL}
\label{sec: 5 subsec ADMM for MLMG-SL} 

The augmented Laplacian function of Problem (\ref{eq: continuous MLMG-SL}) is formulated as follows
\begin{flalign}
& L_{\rho_1, \rho_2}(\Z,\Q, \HH_0,\HH_1, \Lambdaa_1, \Lambdaa_2)  = -[\alpha\tr(\overline{\Y}^\top\Z) + (1-\alpha)
\nonumber
\\ 
& \tr(\overline{\Y}^\top(\HH_0+ \HH_1))] + \beta\tr(\Z \mathbf{L}_{\X}\Z^\top)  + \gamma_0\|\HH_0\|_* +  \gamma_1\|\HH_1\|_{1,1}
\nonumber
\\
&  + \tr[(\Lambdaa_1^\top(\Z-(\HH_0+ \HH_1))] + \tr[\Lambdaa_2^\top(\mathbf{\Phi}^\top\Z-\Q)] 
\nonumber
\\
& + \frac{\rho_1}{2}\|\Z-(\HH_0+ \HH_1)\|_F^2 + \frac{\rho_2}{2}\|\boldsymbol{\mathbf{\Phi}}^\top\Z-\Q\|_F^2,
\label{eq: Laplacian of MLMG-SL}
\end{flalign}
where $\Lambdaa_1 \in \mathbb{R}^{m \times n}$ and $\Lambdaa_2 \in \mathbb{R}^{n_e \times n}$ are two dual variables, and $\rho_1, \rho_2 > 0$ are penalty parameters. 
Then we need to solve the following optimization problem
\begin{flalign}
  \min_{\Z, \Q, \HH_0, \HH_1} \max_{\Lambdaa_1, \Lambdaa_2} &  L_{\rho_1, \rho_2}(\Z,\Q, \HH_0,\HH_1, \Lambdaa_1, \Lambdaa_2), 
 \label{eq: min max Laplacian of MLMG-SL}
\end{flalign}
which can be alternatively solved by optimizing the following sub-problems. 

\vspace{0.3em}  \noindent
{\bf Sub-problem with respect to $\Z$:} 
\begin{flalign}
& \Z_{t+1} =   \underset{\Z\in[0,1]^{m\times n}}{\arg\min} L_{\rho_1, \rho_2}(\Z,\Q_t,\Lambdaa_t,\HH_{0(t)},\HH_{1(t)}) 
\label{eq: update of Z in admm for MLMG-SL}
\\
& =  
- \alpha\tr(\overline{\Y}^\top\Z) + \beta\tr(\Z \mathbf{L}_{\X}\Z^\top)   
+ \tr(\Lambdaa_{1(t)}^\top\Z) + \frac{\rho_1}{2}\|\Z
\nonumber
\\
& - \HH_{0(t)} - \HH_{1(t)})\|_F^2 + \tr(\Lambdaa_{2(t)}^\top \mathbf{\Phi}^\top\Z) + \frac{\rho_2}{2}\|\mathbf{\Phi}^\top\Z-\Q_t\|_F^2 
\nonumber
\\
& = \underset{\Z\in[0,1]^{m\times n}}{\arg\min} \tr[(-\alpha\overline{\Y} + \Lambdaa_{1(t)} - \rho_1(\HH_{0(t)}+ \HH_{1(t)})
\nonumber
\\
& - \rho_2 \mathbf{\Phi}\Q_t + \mathbf{\Phi}\Lambdaa_{2(t)})^\top\Z] + \beta\tr(\Z \mathbf{L}_{\X}\Z^\top) + \tr[\Z^\top(\frac{\rho_1}{2}
\nonumber
\\
& + \frac{\rho_2}{2}\mathbf{\Phi}\mathbf{\Phi}^\top)\Z]
\nonumber
\end{flalign}
Similar to the sub-problem (\ref{eq: update of Z in admm}), it is not hard to see that (\ref{eq: update of Z in admm for MLMG-SL}) is convex, which can also be efficiently solved by the PGD algorithm with line search.

\vspace{0.3em}  \noindent
{\bf Sub-problem with respect to $\HH_0$:}  
\begin{flalign}
& \HH_{0(t+1)} =  \underset{\HH_{0}}{\arg\min} L_{\rho_1, \rho_2}(\Z_{t+1},\Q_t,\Lambdaa_t,\HH_{0},\HH_{1(t)}) 
\label{eq: update of H0 in admm for MLMG-SL}
\\
& = 
 - \tr(\Lambdaa_{1(t)}^\top\HH_{0(t)}) + \gamma_0\|\HH_{0(t)}\|_* + \frac{\rho_1}{2}\|\Z_{t+1} - \HH_{0(t)} 
\nonumber 
\\
&  - \HH_{1(t)}\|_F^2 + (1-\alpha)\tr(\overline{\Y}^\top\HH_{0(t)})
\nonumber
 \\
& =\underset{\HH_{0}}{\arg\min} \frac{\gamma_0}{\rho_1}\|\HH_{0(t)}\|_* +\frac{1}{2}\|\HH_{0(t)}-( \Z_{t+1} - \HH_{1(t)} + \mathbf{E}) \|_F^2
\nonumber
\\
&   
= \mathcal{D}_{\frac{\gamma_0}{\rho_1}}(\Z_{t+1} - \HH_{1(t)} + \mathbf{E}), 
\nonumber
\end{flalign}
where we define $\mathbf{E} = \frac{\Lambdaa_{1(t)}+ (1-\alpha)\overline{\Y}}{\rho_1}$ to save space. 
$\mathcal{D}_{\lambda}(\A)$ $= \U_{\A} \mathcal{S}_{\lambda}(\mathbf{\Sigma}_{\A}) \V_{\A}^\top$ denotes the singular value soft-thresholding operator \cite{image-alignment-pami-2012}, utilizing the soft-thresholding operator 
$\mathcal{S}_{\lambda}(\B_{ij})$ $= \text{sign}(\B_{ij})\max(0, |\B_{ij}| - \lambda)$ and the SVD decomposition $\A = \U_{\A} \mathbf{\Sigma}_{\A} \V_{\A}^\top$.

\vspace{0.3em}  \noindent
{\bf Sub-problem with respect to $\HH_1$:}  
\begin{flalign}
& \HH_{1(t+1)}  =  \underset{\HH_{1}}{\arg\min} L_\rho(\Z_{t+1},\Q_t,\Lambdaa_t,\HH_{0(t+1)},\HH_{1}) 
\label{eq: update of H1 in admm for MLMG-SL}
 \\
 & = (\alpha -1) \tr(\overline{\Y}^\top \HH_1) +
  \gamma_1 \| \HH_1 \|_{1,1} - \tr(\Lambdaa_{1(t)}^\top \HH_1) 
\nonumber 
\\
& + \frac{\rho_1}{2}\|\Z_{t+1} -(\HH_{0(t+1)}+ \HH_1)\|_F^2
\nonumber 
\\
& =\underset{\HH_{1}}{\arg\min} \frac{\gamma_1}{\rho_1} \| \HH_1 \|_{1,1} + \frac{1}{2} \| \HH_1 - (\Z_{t+1} - \HH_{0(t+1)} + \mathbf{E} )\|_F^2
\nonumber 
\\
  & =\mathcal{S}_{\frac{\gamma_1}{\rho_1}}(\Z_{t+1} - \HH_{0(t+1)} +  \mathbf{E})
\nonumber
\end{flalign}
where the soft-thresholding operator $\mathcal{S}_{\frac{\gamma_1}{\rho_1}}(\cdot)$ and $\mathbf{E}$ are defined as above.

\vspace{0.3em}  \noindent
{\bf Sub-problems with respect to $\Q$, $\Lambdaa_{1}$ and $\Lambdaa_{2}$:}  
\begin{flalign}
\Q_{t+1} &= \max(\boldsymbol{0}, \mathbf{\Phi}^\top\Z_{t+1} + \frac{1}{\rho_2}\Lambdaa_{2(t)}^\top) 
\label{eq: update of Q in admm for MLMG-SL}
\\
\Lambdaa_{1(t+1)} & = \Lambdaa_{1(t)} + \rho_1(\Z_{t+1} - \HH_{0(t+1)} - \HH_{1(t+1)})
\label{eq: update of lambda_1 in admm for MLMG-SL}
\\ 
\Lambdaa_{2(t+1)} & = \Lambdaa_{2(t)} + \rho_2(\mathbf{\Phi}^\top\Z_{t+1} - \Q_{t+1})
\label{eq: update of lambda_2 in admm for MLMG-SL}
\end{flalign}

In terms of the convergence, as demonstrated in \cite{ADMM-multiblock-not-convergent-2016}, the ADMM algorithm for multi-block (more than 2 blocks) convex optimization is not necessarily convergent. 
Some further assumptions about the objective function or the parameters $\rho_1, \rho_2$ should be added to guarantee the convergence.  
For example, a recent work \cite{ADMM-three-block-convergence-2016} has proved that if the variable sequence generated by the above ADMM algorithm is assumed to satisfy the sub-strong monotonicity, and the parameters $\rho_1, \rho_2$ are set in a bounded range, then the algorithm will converge to a KKT solution. Please refer to \cite{ADMM-three-block-convergence-2016} for more details.

\section{Experiments} \label{sec: 5 experiments}
In this section, 
we evaluate the proposed method and the state-of-the-art methods on four benchmark datasets in image annotation and video annotation.

\begin{figure*}[!htb]
\centering
\includegraphics[width=\textwidth,height=1.9in]{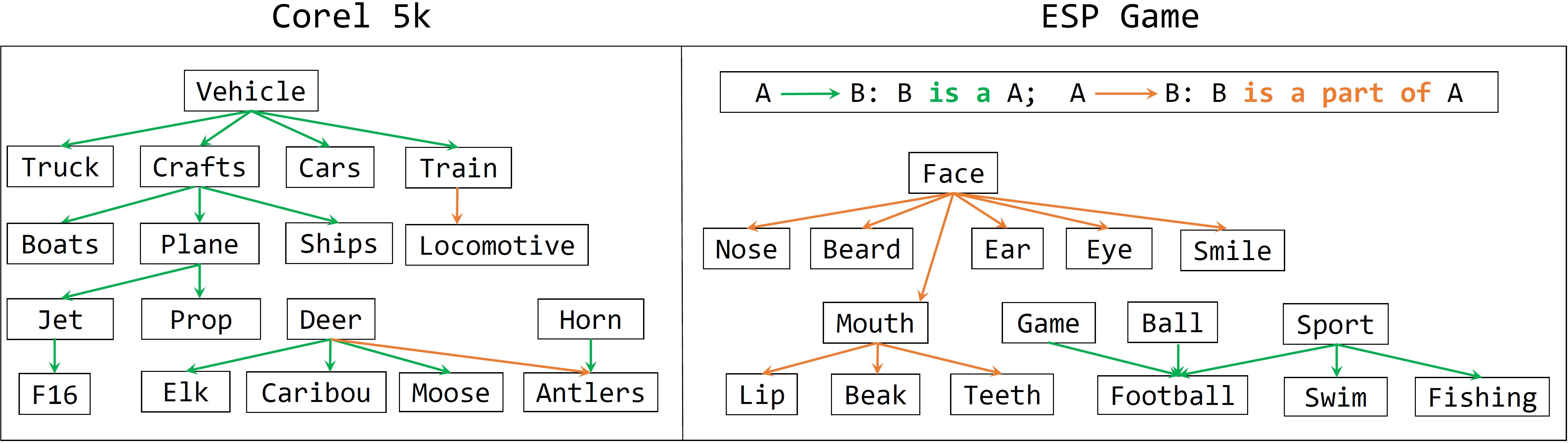}
\caption{A part of semantic hierarchies of Corel 5k and ESP Game, respectively.}
\label{fig: semantic hierarchy corel5k and espgame}
\end{figure*}

\begin{table}[t]
\caption{ Details of the semantic hierarchies for four datasets that we augmented. Column notations C1 to C6 respectively indicate the number of: nodes, edges, root nodes, leaf nodes, singleton nodes and depth.}
\label{table: Semantic hierarchy}
\vspace{-0.05in}
\begin{center}
\scalebox{1.02}{
\begin{tabular}{| p{.12\textwidth} | p{.03\textwidth} p{.03\textwidth}
p{.03\textwidth} p{.03\textwidth} p{.03\textwidth} p{.03\textwidth}|}
\hline
dataset & C1 & C2 & C3 & C4 & C5 & C6
\\
\hline \hline
Corel 5k \cite{corel5k-eccv-2002}
& 260 & 138 &  37 & 98 & 99 & 5
\\
ESP Game \cite{esp-game-2004}
& 268 & 129 & 41 & 92 & 120 & 4
\\
IAPRTC-12 \cite{iaprtc-12-data-2006}
&  291 & 179 & 36 & 132 & 98 & 4
\\
MediaMill \cite{mediamill-data-2006}
& 101 & 63 & 14 & 52 & 30 & 3
\\
\hline
\end{tabular}
}
\end{center}
\end{table}

\subsection{Experimental Setup}
\label{sec: 5 subsec experimental setup}

\noindent\textbf{Datasets}. Four benchmark multi-label datasets are used in our experiments, including Corel 5k \cite{corel5k-eccv-2002}, ESP Game \cite{esp-game-2004}, IAPRTC-12 \cite{iaprtc-12-data-2006}, and MediaMill \cite{mediamill-data-2006}. These datasets are chosen because they are representative and popular benchmarks for comparative analysis among MLML methods. 
The features and labels of the first three image datasets are downloaded from the seminal work \cite{multilabel-dataset-image-iccv-2009} \footnote{http://lear.inrialpes.fr/people/guillaumin/data.php}. 
Each image in these datasets is described by the dense SIFT features and is represented by a 1000-dimensional vector.
Moreover, the original images of ESP Game and IAPRTC-12 are also available. Thus we can extract other features. It is known that the deep feature extracted from CNNs shows surprising performance in many image-based tasks. Thus we also adopt the CNN features in our experiments for this two datasets. Specifically, the output of the relu7 layer of the pre-trained VGG-F\footnote{http://www.vlfeat.org/matconvnet/pretrained/} \cite{vggf-bmvc-2014} model is extracted as the feature vector of 4096 dimensions. 
The features and labels of the video dataset MediaMill are downloaded from the `Mulan' website \footnote{\text{http://mulan.sourceforge.net/datasets-mlc.html}}.

\vspace{4pt}\noindent \textbf{Semantic hierarchies}. We build semantic hierarchies for each dataset based on WordNet \cite{wordnet-1998}. Specifically, for each dataset, we search for each class in Wordnet and extract one or 
more directed paths ({\it i.e.}, a long sequence of directed edges from parent class to child class).
In each path, we identify the nearest upstream class that is also in the label vocabulary ({\it i.e.}, the set $\{c_1,\cdots,c_m\}$ of all classes of the dataset of interest) as the parent class. This procedure is repeated for all $m$ classes in this dataset to form the semantic hierarchy matrix $\mathbf{\Phi}$. In the same manner, we build the hierarchy for each of the four datasets. 
Similar to \cite{hierarchy-image-annotation-review-pr-2012}, we also consider two types of semantic dependency: `is a' and `is a part of'. For example, a part of the semantic hierarchy of Corel 5k and ESP Game is shown in Fig. \ref{fig: semantic hierarchy corel5k and espgame}.
Note that not all `is a part of' dependencies are included in the semantic hierarchy, to ensure the corresponding semantic hierarchical constraint to be correct. For example, ``tree is a part of forest'', but when `tree' exists in one image, `forest' doesn’t always exist, so we abandon it.
 A summary of these semantic hierarchies\footnote{The complete semantic hierarchies and the complete label matrices of all four datasets can be downloaded from ``https://sites.google.com/site/baoyuanwu2015/".}
 is presented in Table \ref{table: Semantic hierarchy}.

Note that in aforementioned datasets, the provided ground-truth label matrices do not fully satisfy the semantic hierarchical constraints. In other words, some instances are labelled with a child class but not with the corresponding parent class. Therefore, we augment the label matrix according to the semantic hierarchy for each dataset. The semantically enhanced comprehensive ground-truth label matrix is referred to as ``complete", while the originally provided label matrix as ``original". The basic statistics of both the complete and original label matrices are summarized in Table \ref{table: dataset}.

\vspace{4pt}
\renewcommand{\arraystretch}{1.08}
\begin{table*}[!tbh]
\caption{ Data statistics of features and label matrices of four benchmark datasets. The column indexes C1 to C5 respectively indicate: the dimension of traditional features, the dimension of CNN features, the average positive classes of each instance, the average positive instances of each class and the positive label proportion in the whole training label matrix.}
\label{table: dataset}
\begin{small}
\begin{center}
\scalebox{0.87}{
\begin{tabular}{| p{.14\textwidth}  | p{.21\textwidth} p{.05\textwidth} p{.05\textwidth} p{.05\textwidth} p{.12\textwidth} | p{.11\textwidth} | p{.055\textwidth}  p{.055\textwidth} p{.055\textwidth} |}
\hline
dataset &  \scalebox{0.93}{\# instances (training, test)} & \# class & C1 & C2 & $k_X$, $k_C, \frac{r_+}{r_-}$ & label matrix & C3 & C4 & C5 \\
\hline \hline
\multirow{2}{*}{Corel 5k \cite{corel5k-eccv-2002} }
&  \multirow{2}{*}{4999 = 4500 + 499} & \multirow{2}{*}{260} & \multirow{2}{*}{1000} & \multirow{2}{*}{\text{N/A}} & \multirow{2}{*}{20, 10, 100} & original & 3.40 & 65.30 & 1.31\%
\\ \cline{7-10}
  &  &  &  & & & complete & 4.84 & 93.06 & 1.86\%
\\
\hline 
\multirow{2}{*}{MediaMill \cite{mediamill-data-2006} }  & \multirow{2}{*}{43907 = 30993 + 12914} & \multirow{2}{*}{101} & \multirow{2}{*}{120} & \multirow{2}{*}{\text{N/A}} & \multirow{2}{*}{20, 10, 100} & original & 4.38 & 1902 & 4.33\%
\\ \cline{7-10}
  & &  &  & &  & complete & 6.17 & 2680 & 6.10\%
  \\
\hline
\multirow{2}{*}{ESP Game \cite{esp-game-2004}}
& \multirow{2}{*}{20770 = 18689 + 2081} & \multirow{2}{*}{268} & \multirow{2}{*}{1000} & \multirow{2}{*}{4096} & \multirow{2}{*}{20, 10, 100} & original & 4.69 & 363.2 & 1.75\%
\\ \cline{7-10}
   & &  &  & & & complete & 7.27 & 563.6 & 2.71\%
  \\
\hline 
\multirow{2}{*}{IAPRTC-12 \cite{iaprtc-12-data-2006} }
&  \multirow{2}{*}{19627 = 17665 + 1962} & \multirow{2}{*}{291} & \multirow{2}{*}{1000} & \multirow{2}{*}{4096} & \multirow{2}{*}{20, 10, 100} & original & 5.72 & 385.71 & 1.97\%
\\ \cline{7-10}
   & &  &  & & & complete & 9.88 & 666.3 & 3.39\%
  \\
\hline 
\end{tabular}
}
\end{center}
\end{small}
\end{table*}

\vspace{4pt}\noindent\textbf{Methods for comparison}. 
In our methods, there are two places we use semantic hierarchies. One is to fill in the original initial label matrix $\Y$, {\it i.e.}, if $\Y(i, j) = 1$, then $\Y(pa(i), j)$ is set to $1$. $pa(i)$ denotes the ancestor classes of class $i$ in the semantic hierarchy. 
If we do this filling in $\Y$, then it is referred to as {\it filling initial label matrix}, otherwise {\it not-filling initial label matrix}. 
The other place is to construct the constraint matrix $\mathbf{\Phi}$ (see Eq. (\ref{eq: semantic hierarchy constraint})). To evaluate the influences of this two usages, we compare different variants of our methods, as shown in Table \ref{table: different algorithm names}. 
Several state-of-the-art multi-label methods that can also handle missing labels are used for comparison, including 
MC-Pos \cite{MC-Pos-nips-2011}, FastTag \cite{fasttag-icml-2013}, MLML-exact and MLML-appro \cite{my-icpr-2014}, as well as LEML \cite{LEML-ICML-2014}. 
FastTag is specially developed for image annotation, while other methods are general machine learning methods.
Also, a state-of-the-art method in hierarchical multi-label learning, called CSSAG \cite{bi-wei-icml-2011}, is also evaluated. CSSAG is a decoding method based on the predicted continuous label matrix of one another algorithm, {\it i.e.}, the kernel dependency estimation (KDE) algorithm \cite{kde-nips-2002}. However, the KDE algorithm doesn't work in the case of missing labels.
To make a fair comparison between CSSAG and our proposed methods,
the predicted label matrix of MLMG-CO is used as the input of CSSAG. The results are obtained with publicly available MATLAB source code of these methods provided by their authors.
Note that in our previous work \cite{my-iccv-2015}, MLR-GL \cite{bucak-multi-incomplete-2011} and the binary SVM were also compared, but here we choose to remove the comparisons with them, due to their much higher costs on both computation and memory than other compared methods.

\begin{table*}[phtb]
\caption{ Different algorithm names of variants of our methods. See ``Methods for comparison" in Section \ref{sec: 5 subsec experimental setup} for details. }
\label{table: different algorithm names}
\vspace{-0.05in}
\begin{small}
\begin{center}
\scalebox{0.986}{
\begin{tabular}{| p{.17\textwidth} | p{.15\textwidth} p{.205\textwidth} |
p{.15\textwidth} p{.205\textwidth} |}
\hline
model $\rightarrow$ & \multicolumn{2}{c}{MLMG-CO} \vline & \multicolumn{2}{c}{MLMG-SL} \vline \\
\hline \hline
constraint $\downarrow$, \quad initial $\rightarrow$ & not-filling & filling & not-filling & filling 
\\
\hline
without SH constraint & MLMG-CO & \scalebox{0.9}{MLMG-CO + filling}  & MLMG-SL  & \scalebox{0.9}{MLMG-SL + filling}
\\
with SH constraint & \scalebox{0.9}{MLMG-CO + constraint} & \scalebox{0.9}{MLMG-CO + filling + constraint}  & \scalebox{0.9}{MLMG-SL + constraint} & \scalebox{0.9}{MLMG-SL + constraint + filling}
\\
\hline
\end{tabular}
}
\end{center}
\end{small}
\end{table*}

\vspace{4pt}\noindent\textbf{Evaluation metrics}. 
Average precision (AP) \cite{multilabel-evaluation-tkdd-2010} is adopted to measure the ranking performance of the predicted labels of each instance, {\it i.e.}, the ranking performance of each column vector in the continuous label matrix $\mathbf{Z}$. 
Mean average precision (mAP) \cite{information-retrieval-2008} is also adopted to evaluate the performance of the tag-based image retrieval, {\it i.e.}, the ranking performance of each row vector in $\mathbf{Z}$. 
To quantify the degree to which the semantic hierarchical constraints are violated, we adopt a simplified hierarchical Hamming loss, similar to \cite{hml-text-icml-2005},
~\vspace{-.1in}
\begin{flalign}
 \ell_H^k(\hat{\Z}_k, \Y_C) = & 
 \frac{1}{nm}\sum_{i,j}^{n,m}\mathbb{I}\big[ (\Y_C(pa(i), j) = 0) \wedge 
 \label{eq: HL_0}
 \\
& 
 (\hat{\Z}_k(pa(i), j)=0) \wedge (\hat{\Z}_k(i,j)=1) \big],
  \nonumber 
\end{flalign}
where $\hat{\Z}_k$ denotes the discrete label matrix generated by setting the top-$k$ labels in the continuous label vector of each instance as $+1$, while all others as $0$. $\Y_C$ denotes the complete ground-truth label matrix. 
$\wedge$ indicates the logical AND operator. 
$\mathbb{I}(a)$ denotes the indicator function: if $a$ is {\it true}, then $\mathbb{I}(a)=1$, otherwise $\mathbb{I}(a)=0$.
The above equation calculates the case that in the ground-truth $\Y_C(pa(i), j) = 0$, if the predicted label of the parent class is correct ({\it i.e.}, $\hat{\Z}_k(pa(i), j)=0$) but the label of the child class is incorrect ({\it i.e.}, $\hat{\Z}_k(i, j)=1$). This case indicates the violation of semantic hierarchical constraints.
Then we define an {\it average hierarchical loss} (AHL) as $\overline{\ell}_H = \frac{1}{|\mathcal{S}|} \sum_{k \in \mathcal{S}} \ell_H^k$. In experiments we set $\mathcal{S} = \{5, 10, 20, 50, 80\}$ on MediaMill, while $\mathcal{S} = \{5, 10, 20, 50, 100, 150\}$ on other datasets.

\begin{figure*}[!tbh]
\centering
\includegraphics[width=\textwidth, height=2.5in]{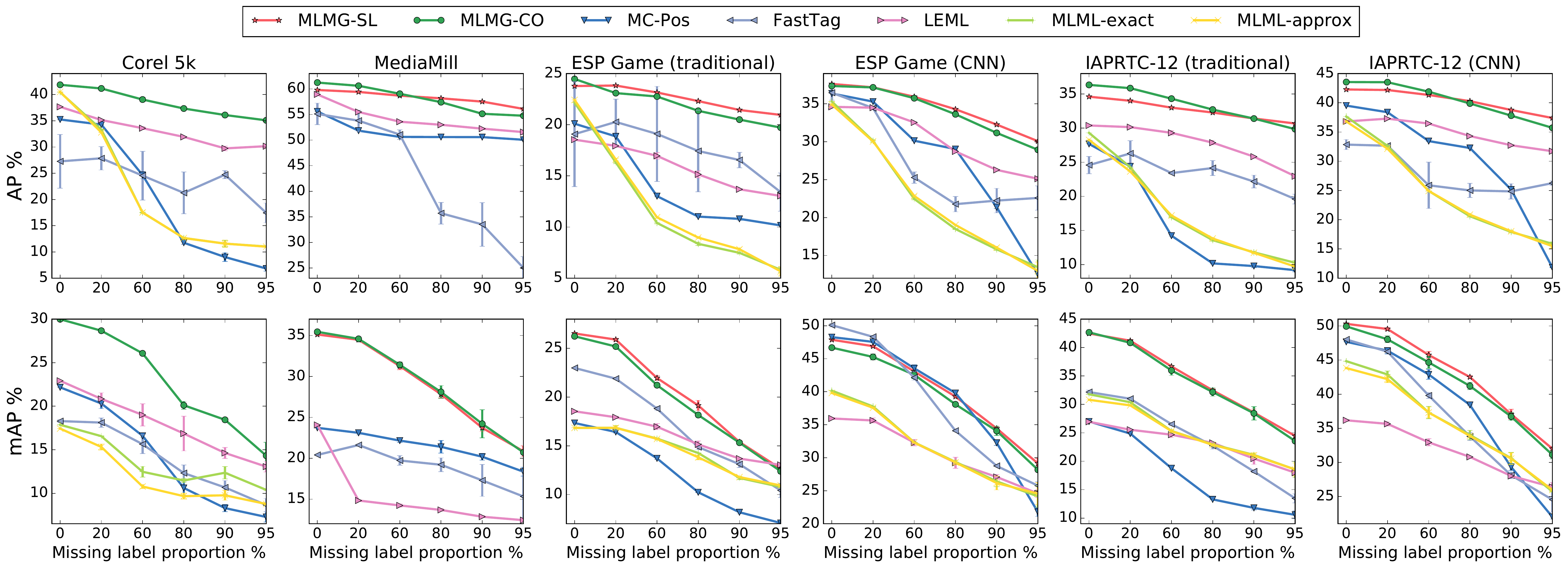}
\caption{Average precision (\textbf{top}) and mAP (\textbf{bottom}) results of four benchmark datasets for methods with the original initial label matrix. The bar on each point indicates the corresponding standard deviation. Figure better viewed on screen.}
\label{fig: AP and MAP results with notfill initial}
\end{figure*}

\begin{figure*}[!tbh]
\centering
\includegraphics[width=\textwidth, height=2.5in]{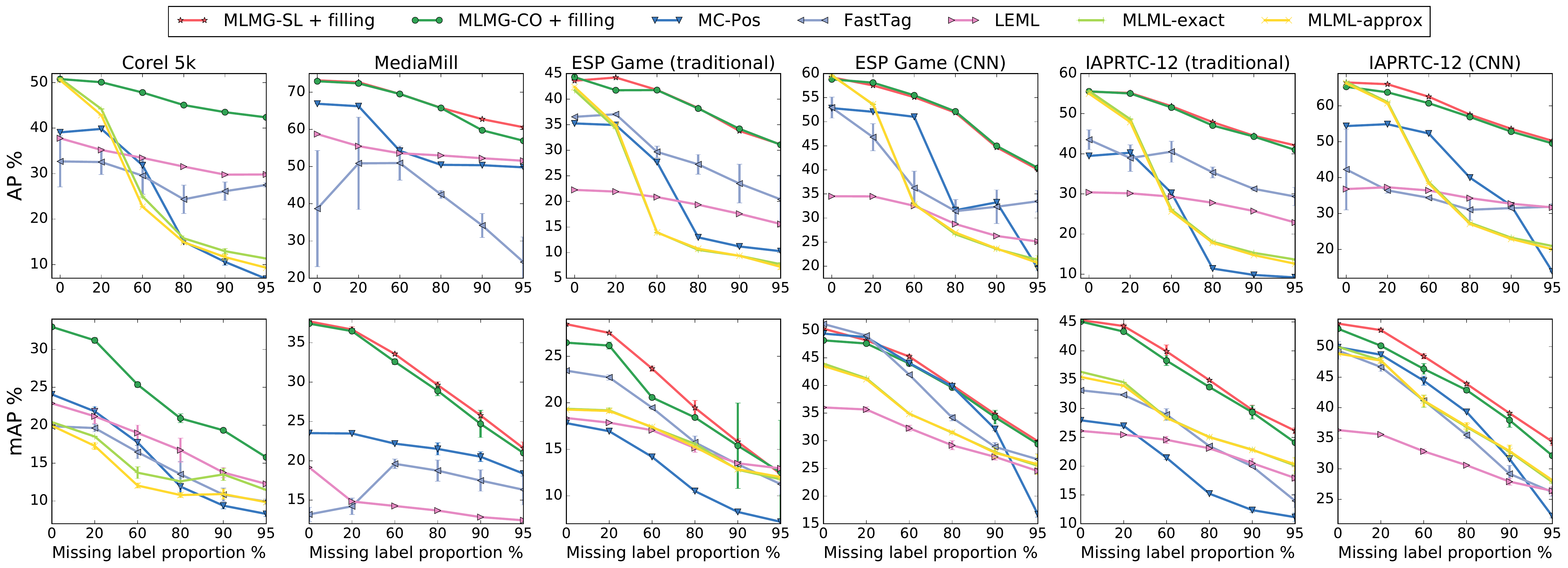}
\caption{Average precision (\textbf{top}) and mAP (\textbf{bottom}) results of four benchmark datasets for methods with the semantically filled-in initial label matrix. The bar on each point indicates the corresponding standard deviation. Figure better viewed on screen.}
\label{fig: AP and MAP results with fillin initial}
\end{figure*}

\vspace{4pt}\noindent\textbf{Other settings}. 
To simulate different scenarios with missing labels, we create training datasets with varying portions of missing labels, ranging from $0\%$ to $95\%$. 
Given a missing label proportion $\tau \in [0, 1)$, firstly we randomly sample rounding$(mn_{tr}\tau)$ entries in the training label matrix, with $n_{tr}$ being the number of training instances.
Then, for every sampled entry, we check whether it corresponds to the leaf or singleton classes in the constructed semantic hierarchies introduced above: if yes, choose it as a missing label, otherwise keep its original value in the training label matrix. 
Consequently, the number of missing labels is smaller than rounding$(mn_{tr}\tau)$. 
The reason of this setting is that if missing labels could be generated on root and intermediate classes, many of them can be directly inferred as positive labels using the semantic hierarchical constraint. Specifically, given one missing label generated on root or intermediate classes, if any one of its descendant classes is positive, then this missing label could be easily corrected to positive. 
Note that this setting is more favourable to other compared methods that don't utilize the semantic hierarchical constraint. 
We repeat the above process 5 times to obtain different missing labels. In all cases, the experimental results of testing data are computed based on the complete label matrix. The reported results are summarized as the mean and standard deviation over all the runs.
The trade-off parameters of MLMG-CO ($\beta$ and $\gamma$) and MLMG-SL ($\alpha$, $\beta$, $\gamma_0$ and $\gamma_1$) are tuned by cross-validation. 
Specifically, for MLMG-CO, we set the tuning ranges as $\beta \in \{0.1, 1, 5, 10, 50\}$, and $\gamma \in \{0, 0.01, 0.1, 1, 10\}$; for MLMG-SL, they are $\alpha \in \{0.1, 0.5, 0.9, 1\}$, $\beta \in \{0.1, 1, 5, 10, 50\}$, $\gamma_0 \in \{0.0001,$ $0.001, 0.01, 1, 10\}$ and $\gamma_1 \in \{0.1, 1, 10, 100, 1000\}$.
$\W_\X$ and $\W_\C$ are defined as sparse matrices. The numbers of neighbors of each instance/class $k_\X$ and $k_\C$ are set as $20$ and $10$, respectively.

\vspace{2pt}\noindent{\bf An acceleration heuristic}.
The computation of the step size $\eta_k$ (see Eq. (\ref{eq: step size alpha})) in MLMG-CO takes about $50\%$ of the running time in each iteration. However, we observe that the step size in consecutive iterations tend to be very close. Thus, we only compute the step size $\eta_k$ once in every 5 iterations, while other consecutive step sizes are derived by multiplying a damping factor ($0.9$ in our experiments) with that of their last iterations. Compared to the case where the step size is computed exactly in each iteration, the runtime is significantly reduced to about $40\%$ (this value depends on $(m,n)$) with a negligible effect in prediction performance.

\subsection{Results without Semantic Hierarchical Constraints}
\label{sec: 5 subsec results without semantic hierarchy}

Figs. \ref{fig: AP and MAP results with notfill initial} and \ref{fig: AP and MAP results with fillin initial} present AP and mAP results when the semantic hierarchy is not used as constraint, {\it i.e.}, $\mathbf{\Phi}=\mathbf{0}$. 
In this case, the inequality constraints (see (\ref{eq: semantic hierarchy constraint}))  in ML-MG are degenerate. Then the proposed model MLMG-CO is a convex QP with box constraints that is solvable using the PGD algorithm, which is more efficient than the ADMM algorithm. 
The semantic hierarchy is only used to fill in the missed ancestor labels in the initial label matrix $\Y$. 
We report both results of using the original initial label matrix and using the semantically filled-in initial label matrix, as shown in Figs. \ref{fig: AP and MAP results with notfill initial} and \ref{fig: AP and MAP results with fillin initial},  respectively. 
With the same initial label matrix and without constraints, it ensures the fair comparison among the formulations in different models for the MLML problem.

As shown in Fig. \ref{fig: AP and MAP results with notfill initial}, 
both MLMG-CO and MLMG-SL consistently outperform other MLML methods, even without using the semantic hierarchy information.
The improvement margin over the most competitive method on the six datasets is at least $5\%$ (AP) or $3\%$ (mAP). Compared with MLML-exact and MLML-approx, MLMG-CO shows significant improvement, especially when large proportions of missing labels exist. There are two main reasons. 
Firstly, there are many noisy negative labels in the original training label matrix, {\it i.e.}, some positive labels 1 are incorrectly set to 0. Since a larger penalty is incurred when misclassifying a positive label in MLMG-CO, the influence of noisy negative labels can be alleviated. However, this is not the case for both MLML-exact and MLML-approx. 
Secondly, MLMG-CO does not give any bias to missing labels. In contrast, missing labels are encouraged to be intermediate values between negative and positive labels in MLML-exact and MLML-approx, which brings in label bias. This is why their performance decreases significantly as the missing proportion increases.

\begin{figure*}[tbph]
\centering
\includegraphics[width=\textwidth, height=3.3in]{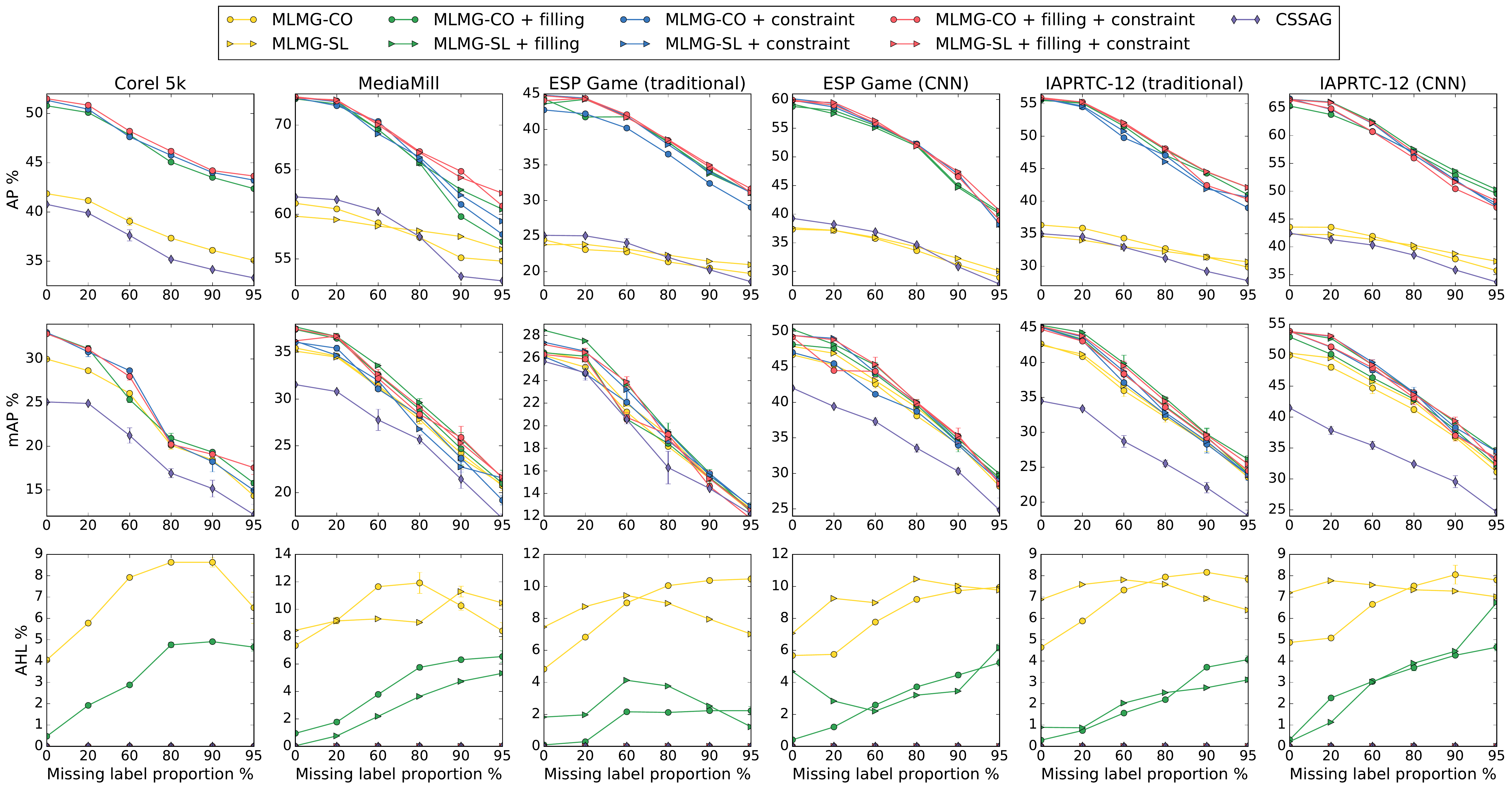}
\caption{Average precision (AP) and average hierarchical loss (AHL) results of our methods and CSSAG.}
\label{fig: AP, mAP and AHL results of all cases}
~\vspace{-.2in}
\end{figure*}

In terms of the comparison between MLMG-CO and MLMG-SL, their performance is similar at most cases. However, we observe that when the missing label proportion is small, MLMG-CO is slightly better than MLMG-SL; as the missing label proportion increases, MLMG-SL shows better performance than MLMG-CO. Specifically, at the case of $95\%$ missing labels, the relative improvements at AP values of MLMG-SL over MLMG-CO are $2.43\%, 6.29\%, 3.94\%$,  $2.78\%, 4.7\%$ on MediaMill, ESP Game (traditional), ESP Game (CNN), IAPRTC-12 (traditional) and IAPRTC-12 ( CNN), respectively; while the relative improvements at mAP values are $0.68\%$, $1.29\%, 3.41\%,$ $3.82\%,$ $2.78\%$,  accordingly. 
It is consistent with the expectations of different assumptions used in MLMG-CO and MLMG-SL. As the class-level smoothness used in MLMG-CO is derived from the initial label matrix, when massive missing labels exist, the obtained smoothness is likely to be inaccurate; in contrast, the sparse and low rank decomposition used in MLMG-SL is independent of the initial label matrix, thus it will not be influenced by the increased missing labels. 
Note that on Corel 5k, due to the extremely sparse positive labels in the label matrix (see the positive proportions in Table \ref{table: dataset}), the SVD step in MLMG-SL algorithm cannot lead to the valid solution, thus the results of MLMG-SL are not reported. 
Besides, the high memory requirements of MLML-exact and MLML-approx preclude running them on MediaMill data. 

The results of using the filled-in initial label matrix are shown in Fig. \ref{fig: AP and MAP results with fillin initial}. 
Similarly, both MLMG-CO and MLMG-SL show much better performance than other compared methods.
At the case of $95\%$ missing labels, the relative improvements at AP values of MLMG-SL over MLMG-CO are $6.38\%, 0.23\%, -0.87\%,$ $2.76\%, 1.43\%$ on MediaMill, ESP Game (traditional), ESP Game (CNN), IAPRTC-12 (traditional) and IAPRTC-12 (CNN), respectively; while the relative improvements at mAP values are $2.91\%,-1.44\%, 1.73$ 
 $\%, 8.51\%, 7.12\%$, accordingly. 

Comparing Figs. \ref{fig: AP and MAP results with notfill initial} and \ref{fig: AP and MAP results with fillin initial}, it is easy to see that the performance of most methods are significantly improved of using the filled-in initial label matrix over using the original initial label matrix. The main reason is that the performance of any models will be significantly influenced by the noisy labels ({\it i.e.}, the ground-truth positive labels are incorrectly set as negative labels in the original initial label matrix). It verifies the contribution of the augmented ground-truth label matrix using our constructed semantic hierarchies.

\subsection{Results with Semantic Hierarchical Constraints} \label{sec: 5 subsec results with semantic hierarchy}
The results of utilizing the semantic hierarchy are shown in Fig. \ref{fig: AP, mAP and AHL results of all cases}. 
To highlight the influence of semantic hierarchical constraints, here we again report the results of MLMG-CO, MLMG-CO + filling, MLMG-SL and MLMG-SL + filling, which have been presented in Section \ref{sec: 5 subsec results without semantic hierarchy}. 

\vspace{4pt}\noindent
\textbf{Comparison among four variants of MLMG-CO}. 
In Fig. \ref{fig: AP, mAP and AHL results of all cases}, the results of four variants of MLMG-CO are denoted using the lines with the $\circ$ mark, but with different colors. 
The results of MLMG-CO are much inferior to those of the other three variants, because the semantic hierarchy is neither used in the initial label matrix, nor as constraints during the optimization. This demonstrates the importance of the semantic hierarchy. 
MLMG-CO + filling and MLMG-CO + constraint show the similar performance evaluated by AP and mAP. For MLMG-CO + constraint, although there are many noisy labels in the initial label matrix, the constraint during the optimization can correct the noisy labels to a large extent, to achieve the similar ranking (AP and mAP) performance with MLMG-CO + filling. However, the AHL values of MLMG-CO + constraint are always 0, while those of MLMG-CO + filling are always positive. This tells that the tag ranking list for each instance produced by MLMG-CO + constraint is semantically consistent, while that produced by MLMG-CO + filling is partially inconsistent with the semantic hierarchical constraint, {\it i.e.}, some children tags are ranked higher than their ancestor tags. 
These two points verify the efficacy of embedding the semantic hierarchy as the linear constraint. 

\renewcommand{\arraystretch}{1.05}
\begin{table*}[phtb]
\caption{ Evaluation of our proposed methods in the semi-supervised multi-label setting. `provided' indicates the image subset of the fully labelled images in the original training set; `val' represents the image subset of the unlabelled images in the original training set; 'testing' denotes the testing images set, where the images are also unlabelled.}
\label{table: ssl of MLMG-CO and MLMG-SL}
\vspace{-0.01in}
\begin{small}
\begin{center}
\scalebox{0.805}{
\begin{tabular}{| p{.06\textwidth} | p{.085\textwidth} | p{.045\textwidth} p{.045\textwidth} p{.045\textwidth} | p{.045\textwidth} p{.045\textwidth} p{.045\textwidth} | p{.045\textwidth} p{.045\textwidth} p{.045\textwidth} | p{.045\textwidth} p{.045\textwidth} p{.045\textwidth} | p{.045\textwidth} p{.045\textwidth} p{.045\textwidth} |}
\hline
\multirow{2}{*}{evaluation} & \multirow{2}{*}{method} & \multicolumn{3}{c}{MediaMill} \vline & \multicolumn{3}{c}{ESP Game (traditional)} \vline & \multicolumn{3}{c}{ESP Game (CNN)}  \vline  
& \multicolumn{3}{c}{IAPRTC-12 (traditional)} \vline & \multicolumn{3}{c}{IAPRTC-12 (CNN)}  \vline
\\
 & & provided & val & testing & provided & val & testing & provided & val & testing & provided & val & testing & provided & val & testing 
 \\
\hline \hline
\multirow{2}{*}{AP} & 
MLMG-CO & 0.9994 & 0.745 & 0.716 & 0.9994 & 0.4373 & 0.4422 & 0.9994 & 0.581 & 0.5887 & 1.0 & 0.5487 & 0.5527 & 0.9971 & 0.6455 & 0.6467
\\
& MLMG-SL & 1.0 & 0.7694 & 0.7234 & 0.9967 & 0.4435 & 0.4441 & 0.9997 & 0.5965 & 0.5946 & 1.0 & 0.5518 & 0.5535 & 0.9997 & 0.6615 & 0.6604
\\
\hline
\multirow{2}{*}{mAP} & 
MLMG-CO & 0.9999 & 0.5167 & 0.3344 & 1.0 & 0.2573 & 0.2458 & 0.9999 & 0.4544 & 0.4578 & 1.0 & 0.4443 & 0.4412 & 0.999 & 0.5273  & 0.509
\\
& MLMG-SL & 1.0 & 0.5404 & 0.3344 & 0.9996 & 0.2633 & 0.266 & 1.0 & 0.4949 & 0.4866 & 1.0 & 0.4472 & 0.4413 & 0.9999 &  0.5417 & 0.533
\\
\hline
\end{tabular}
}
\end{center}
\end{small}
\vspace{-0.04in}
\end{table*}

MLMG-CO + filling + constraint shows the best results among four variants at most cases. It not only gives the highest AP and mAP values, but also the semantically consistent results. 
This demonstrates that both filling and constraint contribute to the performance. 
Note that the improvements of AP values of the other three variants over MLMG-CO are larger than the improvements of mAP values. 
The main reason is that both filling and constraint directly influence the labels in each column, and AP measures the label ranking performance in each column. In contrast, the row ranking, which is measured by mAP, is indirectly influenced by filling and constraint through the label propagation on the mixed dependency graph.

\vspace{4pt}\noindent
\textbf{Comparison among four variants of MLMG-SL}. 
In Fig. \ref{fig: AP, mAP and AHL results of all cases}, the results of four variants of MLMG-CO are denoted using the lines with the 
$\triangleright$ mark, but with different colors. 
Similar with the above comparison about MLMG-CO, MLMG-SL shows the worst performance among its four variants; MLMG-SL + filling and MLMG-SL + constraint show similar performance in most cases; MLMG-SL + filling + constraint shows the best performance.

\vspace{4pt}\noindent
\textbf{Comparison between MLMG-CO and MLMG-SL}. 
In Fig. \ref{fig: AP, mAP and AHL results of all cases}, the corresponding variants of MLMG-CO and MLMG-SL are denoted as lines with the same color, but with different marks ($\circ$ and $\triangleright$ respectively, see the same column of the legend). 
Similar with the comparison between MLMG-CO and MLMG-SL shown in Section \ref{sec: 5 subsec results without semantic hierarchy}, on most datasets, MLMG-CO performs better than MLMG-SL when the missing label proportion is small, while worse when the missing label proportion increases. 

\vspace{4pt}\noindent
\textbf{Comparison between MLMG-CO+constraint and CSSAG}. 
Based on the input continuous labels produced by MLMG-CO, CSSAG will change continuous labels to binary ones according to the semantic hierarchy and the predefined number of positive labels.
Consequently, the AP results of the discrete outputs of CSSAG are similar to the AP values of MLMG-CO. 
But the mAP values of CSSAG are much lower than that of MLMG-CO. We think the reason is that CSSAG focuses on adjusting the column-wise label rankings, while mAP measures the row-wise label ranking performance. 
Moreover, although CSSAG ensures that there are no inconsistent labels in its binary label matrix, it cannot provide a consistent continuous label ranking. In contrast, ML-MG can satisfy these two conditions simultaneously. This comparison demonstrates that using the semantic hierarchy as constraint during optimization (as did in MLMG-CO + constraint) is more effective than using it as the constraint in the post-processing step (as did in CSSAG).

\subsection{Evaluation of Semi-supervised Multi-label Learning}

In above experiments, missing labels are randomly generated across different training instances and different classes. A special case is that some training instances are fully annotated, while other training instances are totally unlabelled, referred to as semi-supervised multi-label learning (SSML) \cite{semi-multi-label-sdm-2008}. Our proposed model can naturally handle SSML. In contrast, not all compared multi-label models that handle missing labels can exploit totally unlabelled images, such as FastTag \cite{fasttag-icml-2013}. 
Here we provide a further evaluation of our proposed methods in the SSML setting. 
Specifically, we randomly choose a subset of training instances, of which the size is equivalent to the size of the testing instance set, then hide their labels to the model ({\it i.e.}, setting the label value to $\frac{1}{2}$). This subset is referred to as {\it validation} set, while the subset of other fully labelled training images are called as {\it provided} set. The equivalent size between the validation set and the testing set ensures the fair comparison of the prediction performance on this two sets. 
For clarity, we only present the experiment at the case of not-filling initial label matrix and with SH constraint (see Table \ref{table: different algorithm names}). Besides, since MLMG-SL is inapplicable to Corel 5k with missing labels, here we ignore Corel 5k. 
The results are shown in Table \ref{table: ssl of MLMG-CO and MLMG-SL}. 
On both ESP Game and IAPRTC-12, the results evaluated by AP and mAP on the validation set are similar with or slightly higher than that on the testing set. It demonstrates that the joint probability distributions of image features and labels are close on training set and testing set of these two datasets. This point could facilitate to determine the model and algorithm parameters of our methods using cross-validation. 
However, on MediaMill, there are significant gaps between the evaluation results on the validation set and the testing set, especially the results evaluated by mAP. 
This reveals that the joint probability distributions of instance features and labels are different between the training and the testing set in MediaMill.

\begin{figure*}[!tbh]
\centering
\includegraphics[width=\textwidth, height=1.25in]{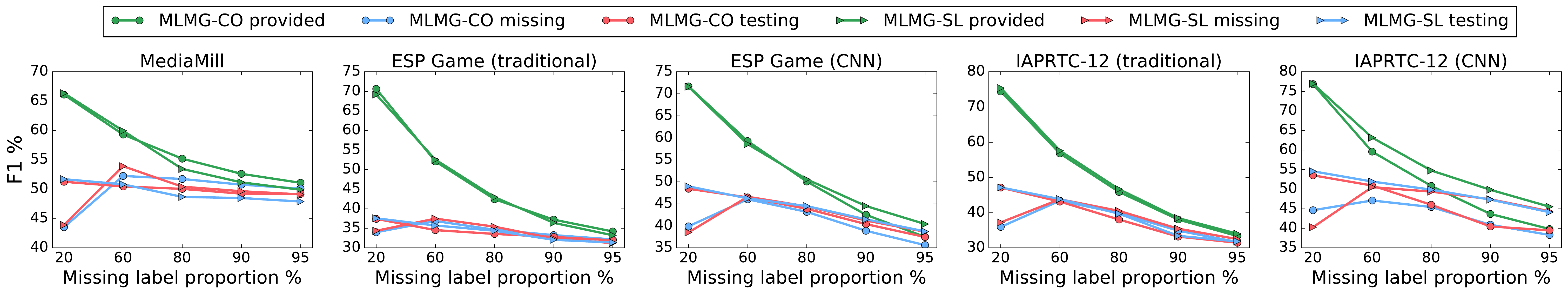}
\caption{F1 scores of the predicted labels of the provided, missing and testing entries in the label matrix. Please see Section \ref{sec: 5 subsec evaluation of missing imputation} for details. Figure better viewed on screen.}
\label{fig: F1 on given, miss, test labels}
\end{figure*}

\begin{figure*}[!tbh]
\centering
\includegraphics[width=\textwidth, height=2in]{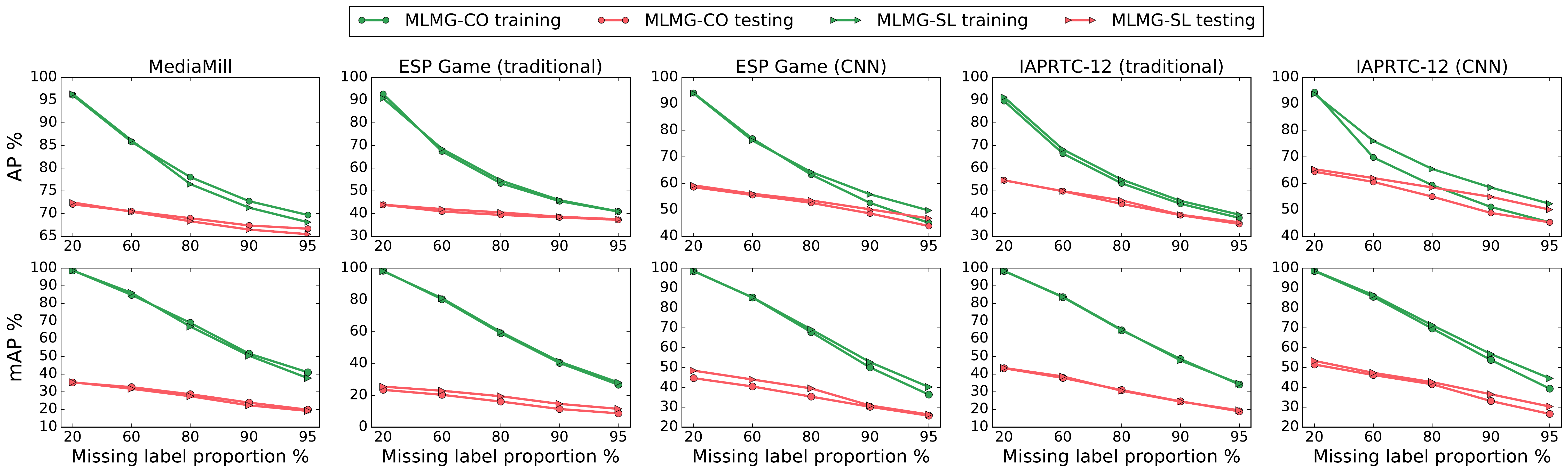}
\caption{Average precision (\textbf{top}) and mAP (\textbf{bottom}) results of our proposed methods on evaluation of missing label imputations. Please see Section \ref{sec: 5 subsec evaluation of missing imputation} for details. Figure better viewed on screen.}
\label{fig: AP and MAP results on all classes could be missing}
\end{figure*}

\subsection{Evaluation of Missing Label Imputations}
\label{sec: 5 subsec evaluation of missing imputation}

As transductive models, our proposed methods can not only predict the labels of testing images, but also impute the missed labels of training images. 
Here we evaluate the imputation performance of missing labels using our methods, and compare with the prediction performance of testing images. 
For clarity, we only present the experiment at the case of not-filling initial label matrix and with SH constraint (see Table \ref{table: different algorithm names}). 
In above experiments, missing labels are generated on only leaf and singleton classes. However, as our method enforces that the label score of the parent class cannot be lower than that of its child classes, the leaf and singleton classes are at the disadvantage in the competition with the root and intermediate classes. Thus, the imputation performance of missing labels corresponding to leaf and singleton classes is very poor, as they have to compete with the provided labels, of which a large proportion correspond to root and intermediate classes. 
Instead, here we change the setting of generating missing labels to that all classes could be missing. Then, at the same missing label proportion, there are actually more missing labels in the training label matrix, compared with the case that missing labels are only generated on leaf and singleton classes. 
Note that Corel 5k is not evaluated here. As demonstrated in Section \ref{sec: 5 subsec results without semantic hierarchy}, the SVD decomposition in MLMG-SL cannot give the valid solution on Corel 5k, due to the extremely sparse positive labels in the provided label matrix of Corel 5k, especially when missing labels on all classes exist. 

We present two evaluations. 
The first evaluation is using $F_1$ score, on provided, missing and testing labels. Specifically, we firstly discretize the predicted continuous label matrix by setting the labels of the top-10 largest scores in the label vector corresponding to each image ({\it i.e.}, the column vector of the label matrix) to $1$, while all other entries in the same label vector as $0$. The sub-label-vectors of both provided labels and missing labels are extracted from each training label vector in the binary label matrix, which could be evaluated using $F_1$ score separately. 
This evaluation could clearly reveal that imputation performance of missing labels, compared with the prediction performance on provided labels and testing labels.   
The results are shown in Fig. \ref{fig: F1 on given, miss, test labels}. 
There are two observations from the results on all datasets. 
One is that the prediction performance of provided labels (see the green lines in Fig. \ref{fig: F1 on given, miss, test labels}) is always better than that of missing and testing labels, but the performance advantage is inversely proportional to the missing label proportion. 
The reason is that the label consistency term in our model (see Eq. (\ref{eq: loss function})) encourages the predicted label scores to be consistent with the ground-truth labels at the provided entries of the label matrix. In contrast, there are no such a consistency term for missing and testing labels. 
When the missing label proportion is small, this consistency term could provide the reference for more labels.  
This explains the inverse proportion between the performance advantage of the prediction on provided labels and the missing label proportion. 
The other observation is that the imputation performance of missing labels (see the blue lines in Fig. \ref{fig: F1 on given, miss, test labels}) is worse than that of testing labels (see the red lines in Fig. \ref{fig: F1 on given, miss, test labels}) at the missing label proportion $20\%$, but their performance becomes similar when the missing label proportion is large.
The missing labels have to compete with the provided labels in the same label column. When a large proportion of labels are provided, the unfairness between missing labels and provided labels may preclude the recovery of the ground-truth positive labels in missing labels. The degree of this unfairness is inversely proportional to the missing label proportion. 
In contrast, there is no such a unfairness among the entries in the same label column for testing images, as all entries in the same column are missing. 
This difference is the main reason that the performance gap between the imputations of missing and testing labels when the missing label proportion is small. 

\begin{table*}[phtb] 
\caption{ Runtime in seconds of all compared methods. The smallest runtime of each dataset is highlighted in bold.}
\label{table: running time}
\begin{center}
\scalebox{1.02}{
\begin{tabular}{|l|cccccccc|}
\hline
\multirow{2}{*}{Datasets} & MC-Pos & FastTag & LEML & MLML & MLML & MLMG-CO & MLMG-CO + & MLMG-SL +
\\
 & \cite{MC-Pos-nips-2011} & \cite{fasttag-icml-2013} & \cite{LEML-ICML-2014} & -exact \cite{my-icpr-2014} & -appro \cite{my-icpr-2014} &  & constraint & constraint
 \\
 \hline \hline
 Corel 5k & 72.94 & 55.94 & 203.4 & 50.6 & 2.56 &\textbf{1.86} & 4.76 & 21.3
 \\
 MediaMill & 165.4 & 56.5 & 151.5 & 238.4 & \textbf{4.63} & 8.75 & 23.39 & 44.25
 \\
 ESP Game (traditional) & 337 & 124 & 638.2 & \multirow{2}{*}{3004} & \multirow{2}{*}{239.2} & \multirow{2}{*}{\textbf{11}} & \multirow{2}{*}{28.2} & \multirow{2}{*}{337.3} 
 \\
 ESP Game (CNN) & 1772  & 199.3 & 2164 &  &  &  &  & 
 \\
 IAPRTC-12 (traditional) & 326.6 & 202.2 & 742.2 & \multirow{2}{*}{3378} & \multirow{2}{*}{238.4} & \multirow{2}{*}{\textbf{11.6}} & \multirow{2}{*}{30.5} & \multirow{2}{*}{158}
 \\
 IAPRTC-12 (CNN) & 1709 & 271.7 & 2063 &  &  &  &  & 
 \\
 \hline
\end{tabular}
}
\end{center}
\end{table*}

The second evaluation is using the metrics AP and mAP, on both training and testing images, as shown in Fig. \ref{fig: AP and MAP results on all classes could be missing}.  It provides the observation of the performance influence of the additional missing labels at the root and intermediate classes. 
Compared with the reported results in Fig. \ref{fig: AP, mAP and AHL results of all cases} (see MLMG-CO+constraint and MLMG-SL+constraint), the corresponding results of 
ML MG-CO and MLMG-SL in Fig. \ref{fig: AP and MAP results on all classes could be missing} are slightly lower. The reason is that the additional missed labels at root and intermediate classes could be easily recovered using our methods, if any one of their descendant classes are provided.

\subsection{Complexity and Runtime}
\label{sec: subsec complexity and runtime}

\noindent
{\bf Complexity}.
Here we analyze the complexities of our methods. 
MLMG-CO is implemented by the PGD algorithm (see Section \ref{sec4: subsec ADMM for MLMG-CO}), which can be further accelerated with the following observations.
First, both $\mathbf{L}_\X$ and $\mathbf{L}_\C$ are sparse, and there are only $n k_\X $ and $m k_\C$ non-zero entries, respectively. $k_\X \ll n$ denotes the number of neighbours at the instance-level, while $k_\C \ll m$ is the number of neighbours at the class-level (their specific values on different datasets are shown in Table \ref{table: dataset}).
Second, there are some shared terms between different steps, such as $\Z \mathbf{B}_k$ and $\C_k \Z$.
Third, it is known that $\tr(abc) = \tr(bca) = \tr(cab)$. Thus we have $\tr(\A_k^\top \Z) = \tr(\Z \A_k^\top)$ or $\tr(\Z^\top \C_k \Z) = \tr(\Z \Z^\top \C_k)$. Considering $m \ll n$ always holds in the datasets in our experiments, the computational cost can be significantly reduced from $O(mn^2)$ to $O(m^2n)$, or from $O(mn^2 + m^2n)$ to $O(m^2n + m^3)$.
Utilizing the above three observations, the actual computational complexity of MLMG-CO is $O_{CO}$ $ = O( T_1 ( (2k_X+ k_C)mn + 3k_C m^2 + 8 m^2 n ))$, with $T_1$ being the number of iterations of the PGD algorithm, and $T_1 \leq 50$ in our experiments.
Consequently, the computational complexity of MLMG-CO + constraint (implemented by ADMM algorithm) is 
 $O_{CO+constraint} = O(T_2 ( O_{CO} + 2 n_e m n) )$, with $T_2$ being the number of iterations of the ADMM algorithm. As shown in Section \ref{sec: 5 subsec convergence curve}, $T_2$ is small in most cases ($T_2\leq 20$). Besides, when calling the PGD algorithm (for updating $\Z$) in ADMM, $T_1$ is set to a small value (less than $5$) and in which case the ADMM algorithm always achieves very good performance in a few iterations. 
Compared to MLMG-CO + constraint, the main additional computational cost of MLMG-SL is the SVD step when updating $\HH_0$ (see Eq. (\ref{eq: update of H0 in admm for MLMG-SL})), which takes $O(m n^2)$ in each iteration of ADMM. Thus the computational complexity of MLMG-SL is $O_{SL} = O(T_2$ $( O_{CO} + 2 n_e m n + m n^2) )$. The complexities of MLMG-SL + filling, MLMG-SL + constraint, MLMG-SL + filling + constraint is similar with that of MLMG-SL.
In terms of the space complexity, since $\mathbf{L}_\X$ and $\mathbf{L}_\C$ are sparse, the largest size of all matrices is only $mn$. So, the overall space complexities are $O(nk_X + mk_C + 11mn + 5m^2 + n_e(2n + n_e))$ of MLMG-CO + constraint and $O(nk_X + 15mn + m^2 + n_e(2n + n_e))$ of MLMG-SL, respectively.

\vspace{1pt}\noindent
{\bf Runtime}.
To demonstrate the computational efficiency of our methods, we report the average runtime of all methods on four datasets in Table \ref{table: running time}. 
All experiments are run on a Linux machine with Intel(R) Xeon(R) CPU E5-2680 v4 2.40GHz. 
In the case of $50\%$ missing labels, each method is run 10 times and the average runtime is recorded.
For all varied algorithms of MLMG-CO and MLMG-SL (see Table \ref{table: different algorithm names}), except for MLMG-CO, and LEML, the number of maximum iterations is set as 20, while that of MLMG-CO is set as 50. 
The ranks of the mapping matrix in LEML are set as 50, 50, 50 and 20 for Corel 5k, ESP Game, IAPRTC-12 and MediaMill, respectively. 
Note that the last five methods in Table \ref{table: running time} are transductive methods, of which the runtime is independent with the feature dimension, thus their runtime on traditional and CNN features is same. 
Besides, since the filling variants of MLMG-CO and MLMG-SL will not influence the computational complexity, thus their corresponding runtime is not presented. 
Clearly, our proposed algorithms (especially MLMG-CO and MLMG-CO + constraint) are significantly more computationally attractive than most compared methods.

\begin{figure*}[hptb]
\centering
\includegraphics[width=\textwidth,height=1.85in]{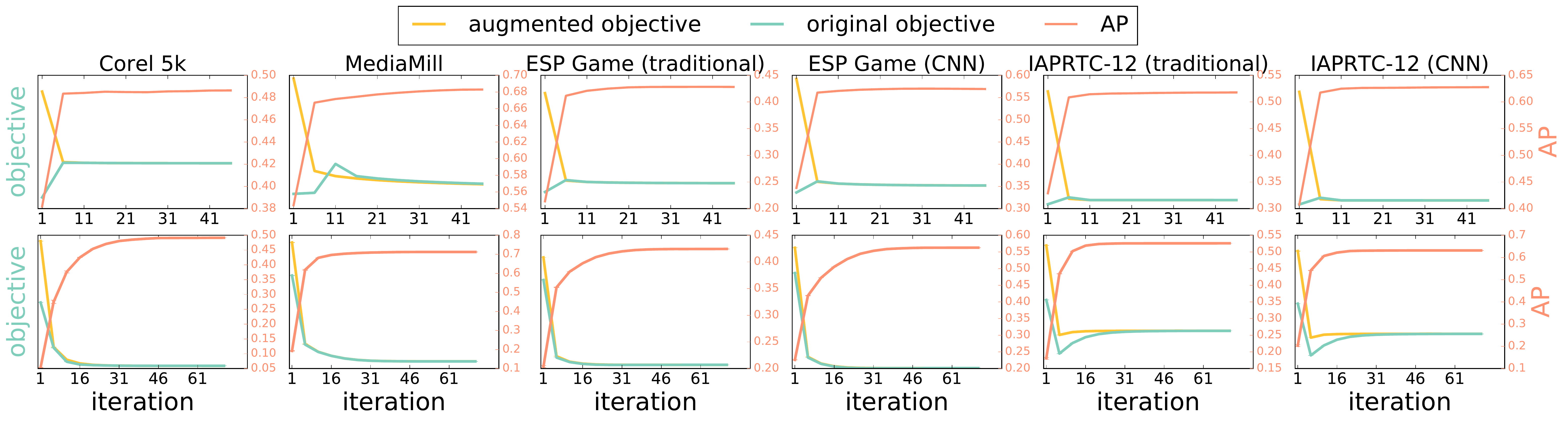}
\caption{Convergence curves of MLMG-CO with (\textbf{Top}): initializing missing labels as 0; (\textbf{Bottom}): initializing missing labels as random (10 times) values in $[0,1]$.
`objective' indicates the value of the (augmented/original) functions. To save space, we hide the objective values at the left vertical axis. 
AP indicates the evaluation value of average precision.
In the bottom row, as the std values are very small compared to the mean values, it is better to enlarge the figure to check them. }
\label{fig: Convergence curve random initialization of MLMG-CO}
\end{figure*}

\begin{figure*}[hptb]
\centering
\includegraphics[width=\textwidth,height=1.85in]{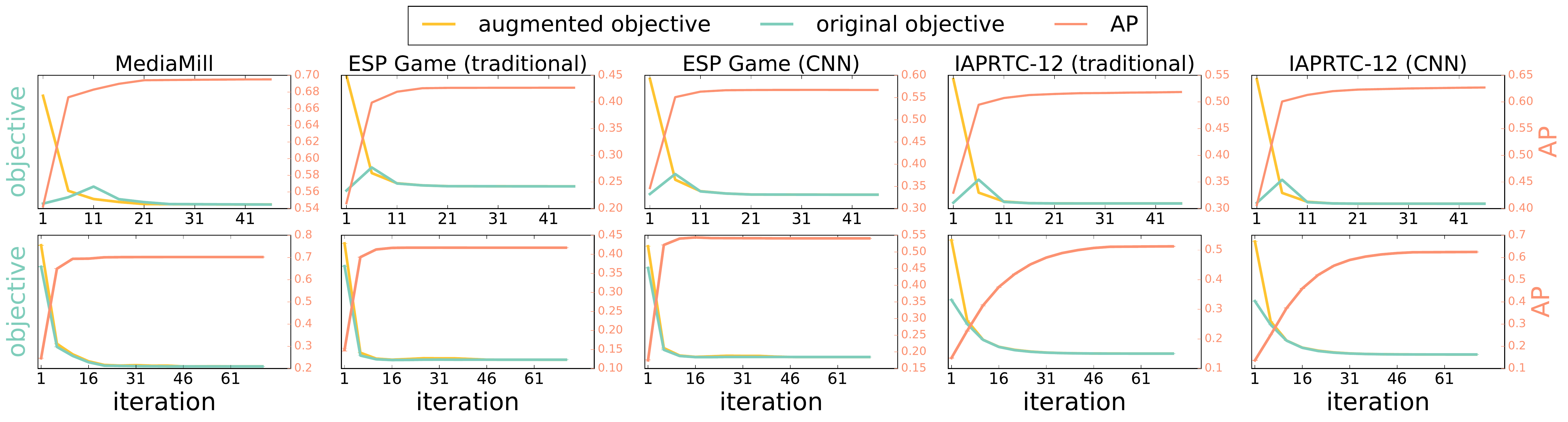}
\caption{Convergence curve of MLMG-SL with (\textbf{Top}): initializing missing labels as 0; (\textbf{Bottom}): initializing missing labels as random (10 times) values in $[0,1]$.}
\label{fig: Convergence curve random initialization of MLMG-SL}
\end{figure*}

\subsection{Convergence and Sensitivity to Label Initialization} 
\label{sec: 5 subsec convergence curve}
Here we evaluate the convergence of MLMG-CO + constraint (denoted as MLMG-CO for clarity in this subsection) and MLMG-SL + constraint (denoted as MLMG-SL for clarity in this subsection) using different label initializations. We only present the curve in the case of $50\%$ missing labels, as shown in Fig. \ref{fig: Convergence curve random initialization of MLMG-CO} for MLMG-CO. 
In the top row of both figures, we initialize the label matrix $\mathbf{Z}$ by setting all missing labels as $0$. In this case, MLMG-CO converges to its best AP value in less than 10 iterations on all datasets. 
In the bottom row, missing labels are initialized as random values in $[0,1]$. We repeat MLMG-CO with 10 random initializations and report the mean and standard deviation (std) values. The extremely small std values of both objective functions and AP values suggest MLMG-CO is insensitive to initialization.
Similarly, the convergence analysis of MLMG-SL is also shown in Fig. \ref{fig: Convergence curve random initialization of MLMG-SL}. 
It also verifies the fast convergence and the robustness to initialization of MLMG-SL. 
As demonstrated before, MLMG-SL cannot give results on Corel 5k when missing training labels exist, the convergence analysis on Corel 5k are not presented here.

\begin{figure}[t]
\centering
\includegraphics[width=0.49\textwidth, height=7in]{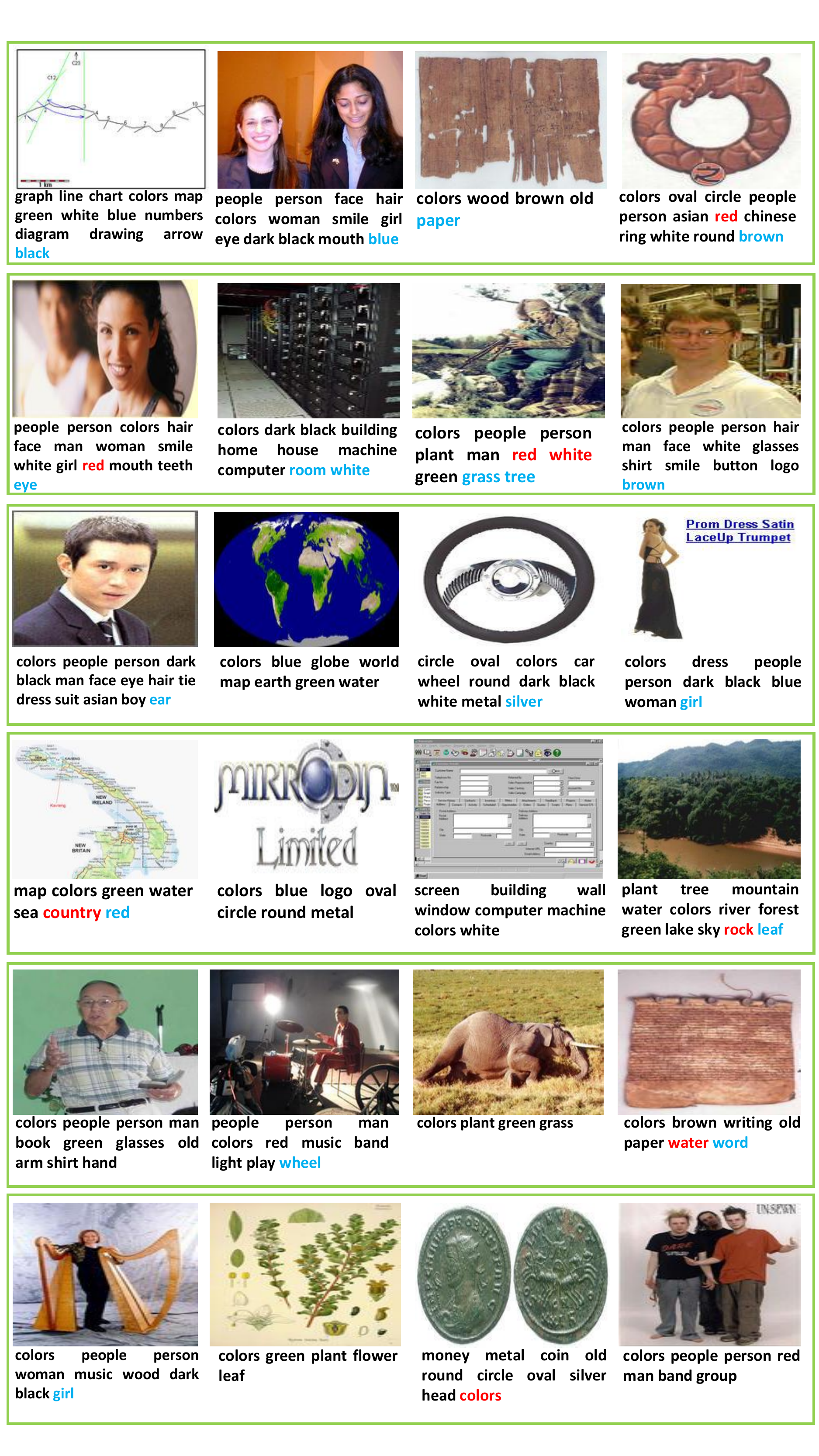}
\vspace{-0.15in}
\caption{Some image annotation results of MLMG-SL + filling + constraint in the case of $20\%$ missing training labels on ESP Game. The incorrectly predicted tags are highlighted in \red{red} color, while the missed ground-truth tags are highlighted in \blue{blue} color.}
\label{fig: quality results of image annotation on espgame}
\end{figure}

\begin{figure}[t]
\centering
\includegraphics[width=0.49\textwidth, height=7in]{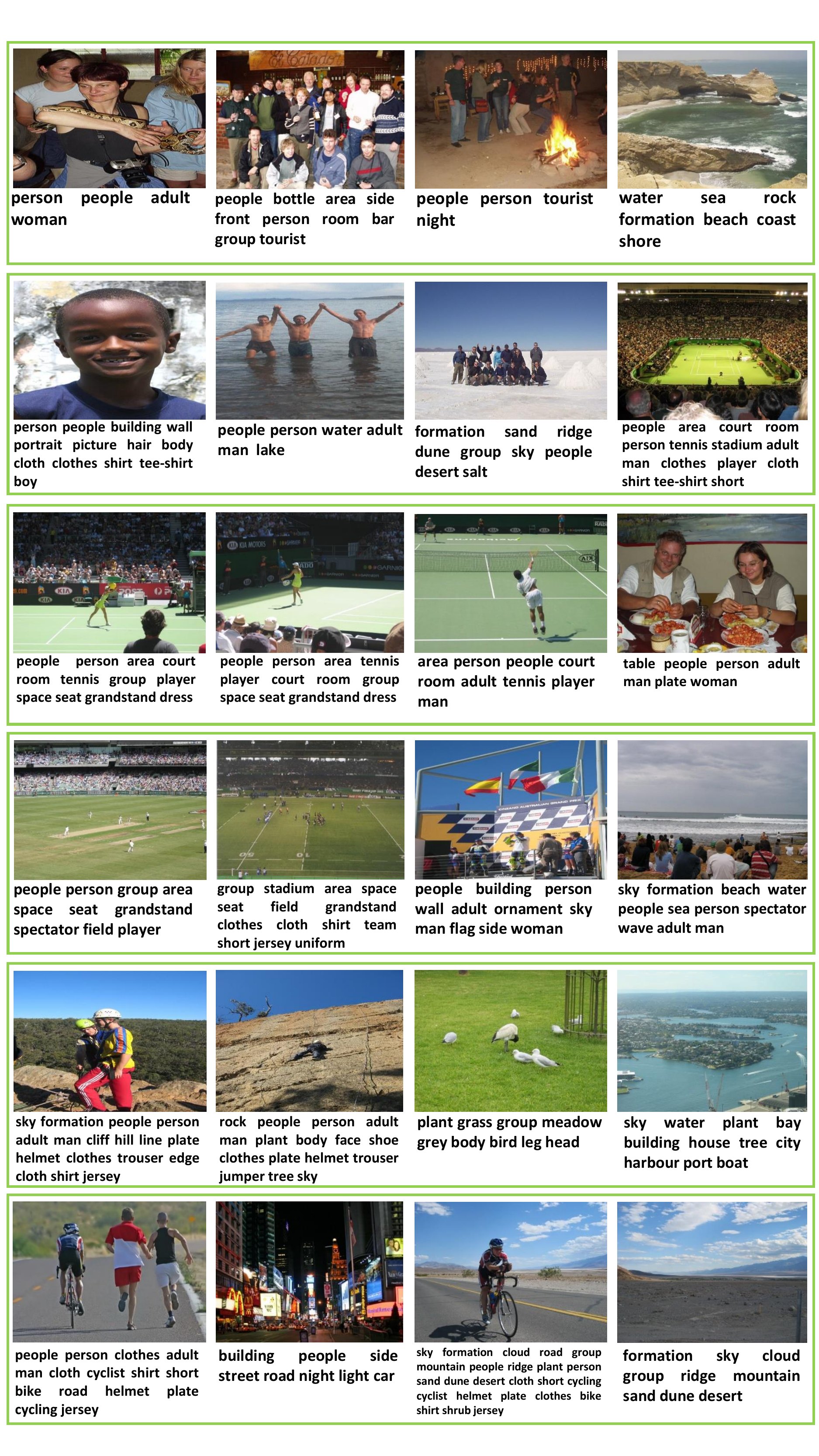}
\vspace{-0.15in}
\caption{Some image annotation results of MLMG-SL + filling + constraint in the case of $20\%$ missing training labels on IAPRTC-12. All above quality results are $100\%$ correct.}
\label{fig: quality results of image annotation on iaprtc12}
\end{figure}

\subsection{Qualitative Results}
\label{sec: subsec qualitative results}

We present some qualitative results of our method MLMG-SL + filling + constraint, as shown in Fig. \ref{fig: quality results of image annotation on espgame} and \ref{fig: quality results of image annotation on iaprtc12} for image annotation, as well as the tag-based image retrieval results in Figs. \ref{fig: quality results of image retrieval on espgame} and \ref{fig: quality results of image retrieval on iaprtc12}.
It is worthwhile to note that parent classes are always ranked higher than their children classes in image annotation results, which is the result of employing semantic hierarchies.

Note that it seems that there are more errors on color tags ({\it e.g.}, `colors', `red', `blue', `green', `white', `brown') than on other object tags in the presented results in Fig. \ref{fig: quality results of image annotation on espgame}. We find that the low quality of ground-truth on these color tags in ESP Game is an important reason. For example, we predict `red' in R1-C4 ({\it i.e.}, the image in row 1 and column 4), `white' in R2-C3, `colors' in R5-C3. These tags are actually correct, but they are missed in the ground-truth. Moreover, the low quality of the ground-truth for color tags will degrade the performance of our methods on these tags. We believe the reason is that  color tags are attribute tags, and they are less important for human annotators than other object tags, during collecting ground-truth tags. 
Unfortunately, these are still many missed color tags in the complete ground-truth label matrix of ESP Game (see Table \ref{table: dataset}), as the semantic hierarchical constraint cannot be help to augment the missed tags on leaf nodes ({\it e.g.}, `red', `blue', `green', `white' and `brown').

\begin{figure*}[phtb]
\centering
\includegraphics[width=\textwidth, height=3.5in]{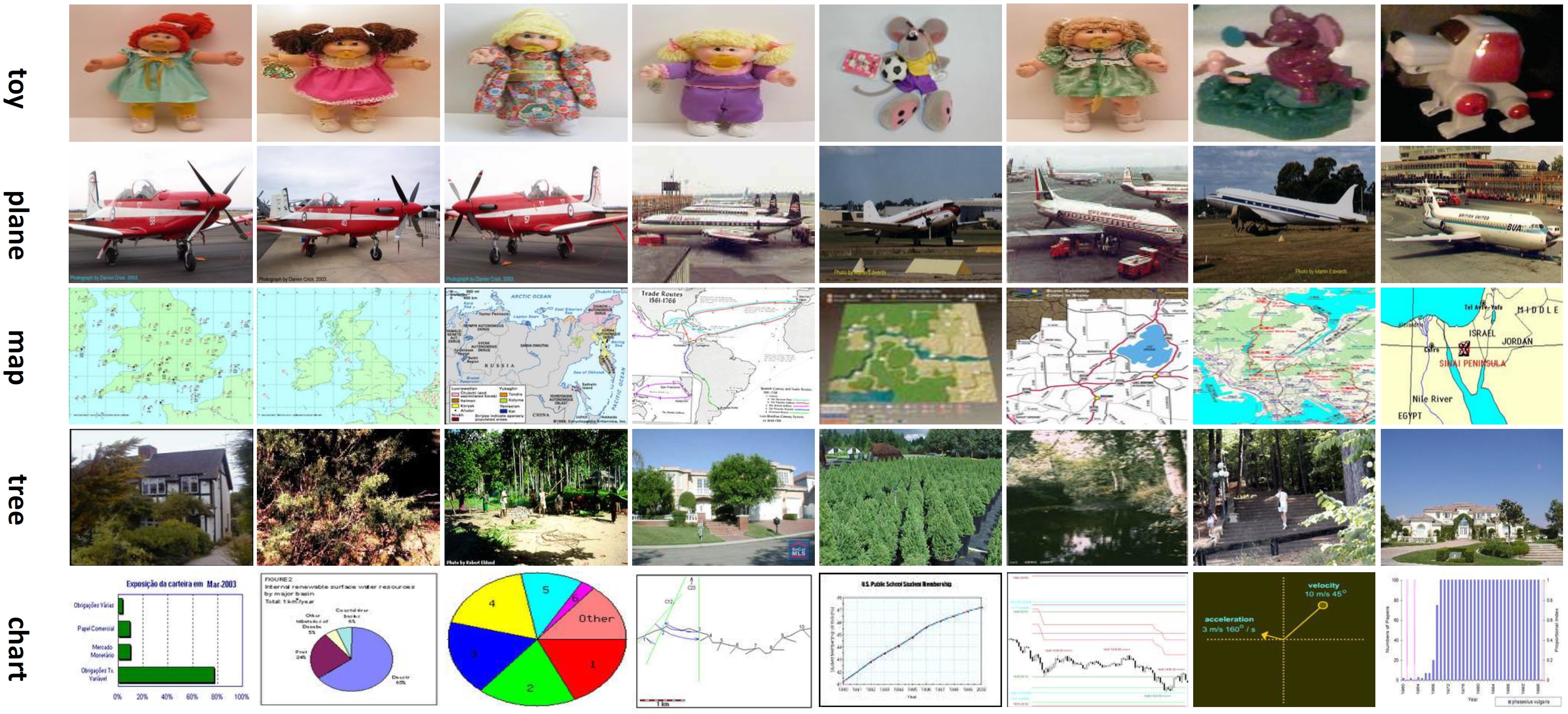}
\vspace{-0.15in}
\caption{Some tag-based image retrieval results of MLMG-SL + filling + constraint on ESP Game.}
\label{fig: quality results of image retrieval on espgame}
\end{figure*}

\begin{figure*}[phtb]
\centering
\includegraphics[width=\textwidth, height=3.5in]{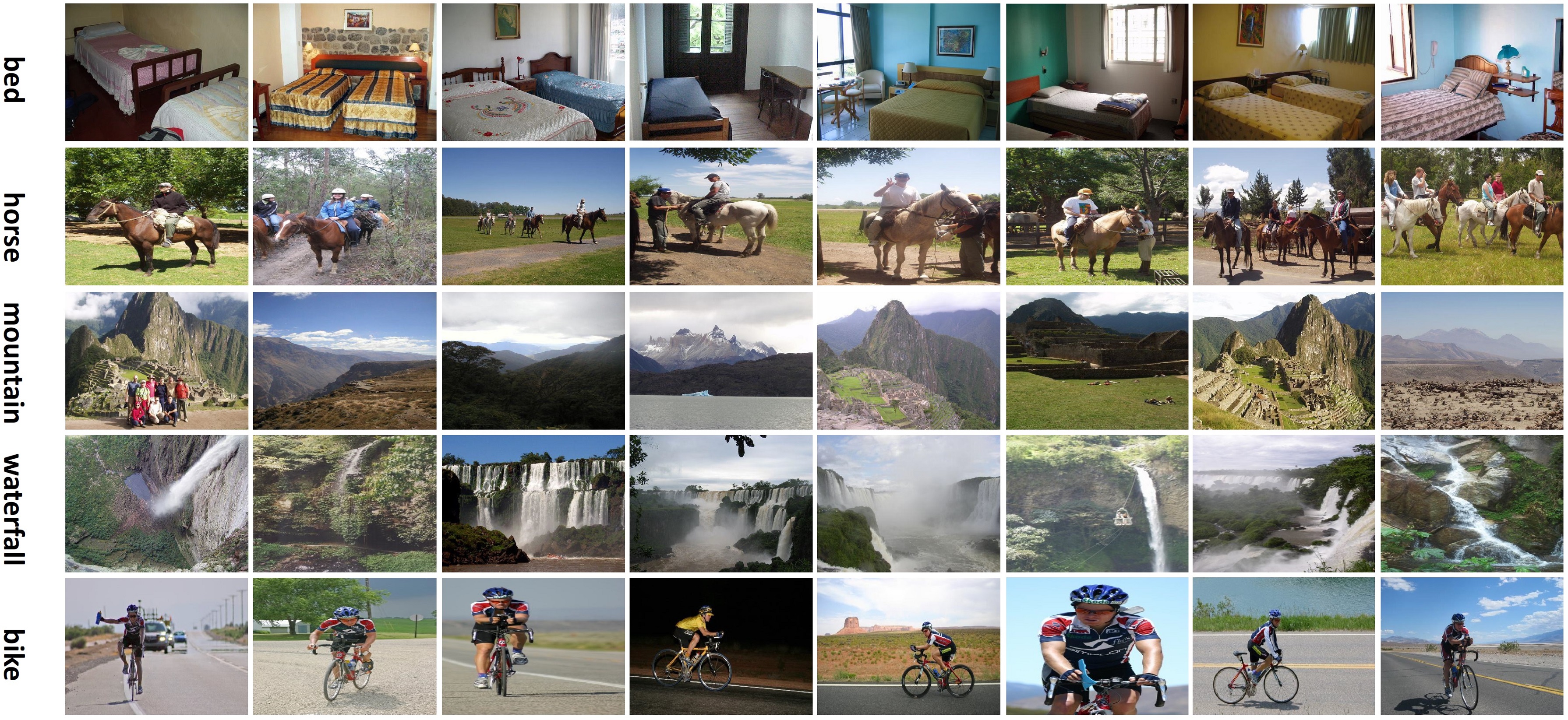}
\vspace{-0.15in}
\caption{Some tag-based image retrieval results of MLMG-SL + filling + constraint on IAPRTC-12.}
\label{fig: quality results of image retrieval on iaprtc12}
\end{figure*}

\section{Discussions} \label{sec: discussion} 

In Section \ref{sec: 5 experiments}, we have presented detailed experimental comparisons between MLMG-CO and MLMG-SL, as well as their variants. 
Here we give a brief summary about their performance. 
Among four variants of MLMG-CO, MLMG-CO + filling + constraint shows the best performance in most cases, as it not only gives the higher AP and MAP values, but also the semantically consistent label ranking for each instance. 
Similarly, MLMG-SL + filling + constraint is also the best choice among the variants of MLMG-SL. 
In terms of the comparison between MLMG-CO and MLMG-SL:
when there are few missing labels in the training label matrix, their performance is similar, and MLMG-CO is slightly better in more cases; 
as the missing label proportion increases, MLMG-SL shows better performance than MLMG-CO in most cases. The main reason is the class-level smoothness assumption used in MLMG-CO is significantly influenced by missing labels, and when the missing label proportion is large, the derived smoothness is possible to be inaccurate. In contrast, the sparse and low rank decomposition used in MLMG-SL is independent with missing labels.  
Besides, as shown in Section \ref{sec: subsec complexity and runtime}, MLMG-SL is of the higher computational cost than MLMG-CO. 
Thus, our suggestion is: when few missing labels occur in the training label matrix, MLMG-CO + filling + constraint is the better choice; when many missing labels exist, MLMG-SL + filling + constraint is preferred.

Here we also present a discussion about the advantages and limitations of our propose methods. Both MLMG-CO and MLMG-SL are transductive models, as they propagate the label information between training and testing instances through the instance-level label dependency, and directly predict the label matrix, without a parametric classifier. In experiments we find that the instance-level label dependency contributes a lot to the prediction performance. However, when handling new testing images, the whole set of training images have to be exploited. Thus it is difficult to apply to very large scale or streaming datasets, which is the main limitation of transductive models. 
In contrast, the inductive model with a parametric classifier is easy to predict the labels of new testing instances, thus it is applicable for very large scale or streaming datasets. However, the label dependency among instances is ignored, and the prediction performance may degrade. Besides, as there are much more parameters of the parametric classifier, it requires more training instances to achieve satisfied performance. 
In short, the proposed transductive models evaluated in this manuscript show very superior performance on modest sized datasets, but it is not suitable for very large scale or streaming datasets; the inductive model is suitable for large scale or streaming datasets, but with the possible performance degradation due to the removal of the instance-level label dependency. 
The best approach is combining their advantages into one unified model, while alleviating their limitations. One attempt has been proposed in one of our previous work \cite{my-pr-2015}. The main idea is to replace each entry of the label matrix $\Z$ by a parametric mapping function, {\it i.e.}, $\Z_{ij} = f(\w_i^\top \x_j)$, where $\w_i$ indicates the parameter vector corresponding to the classifier of class $i$. Its computational cost is much higher than that of our proposed methods in this manuscript. 
It deserves more efforts to find a better approach to combine the advantages of transductive and inductive models together, to achieve the better trade-off between the efficacy and efficiency. 

\section{Conclusions}\label{sec: conclusion}

This work has proposed a novel model to handle the problem of multi-label learning with missing labels (MLML). A unified network of label dependencies is built based on a mixed dependency graph to propagate the label information from provided labels to missing labels. 
We construct two types of mixed dependency graphs, including the mixed graph with co-occurrence (MG-CO) and that with sparse and low rank decomposition (MG-SL). 
Both MG-CO and MG-SL utilize the ins- tance-level similarity as undirected edges to connect the label nodes across different instances, as well as the semantic hierarchy as directed edges to connect different classes.
Additionally, another pairwise label dependency called class co-occurrence is also used in MLMG-CO to connect the label nodes corresponding to the same instance. 
In contrast, the high-order label dependency through sparse and low rank decomposition of the whole label matrix is adopted in MLMG-SL to implicitly connect all classes together. 
Based on this two mixed dependency graphs respectively, the MLML problem is formulated as two convex optimization problems, both of which can be efficiently solved by the ADMM algorithm. 
Due to the joint utilization of the instance-level and class-level label dependency, our methods can simultaneously output the image annotation and the tag-based image retrieval results. 
Experimental evaluations on four benchmark datasets have demonstrated the superior performance of our methods against state-of-the-art methods. Moreover, we contribute manually generated semantic hierarchies, as well as the corresponding semantically augmented ground-truth label matrices, for four popular benchmark datasets, which will be beneficial to the research community at large.
These semantic hierarchies, augmented gro-und-truths and the codes of the proposed methods will be available at {\it https://sites.google.com/site/baoyuanwu2015/}.

\appendix

\section{Convexity proof}
\label{sec1: convexity proof}

For clarity and continuity, we rewrite the continuous optimization problem of MLMG-CO here,
\begin{flalign}
  \min_\Z & \hspace{0.05in} f_1(\Z) = -\tr( \overline{\Y}^\top \Z ) + \beta \tr(\Z \mathbf{L}_\X \Z^\top) + \gamma \tr(\Z^\top \mathbf{L}_\C \Z), \nonumber
  \\
 \text{s.t.}   & \hspace{0.05in} \Z\in [0, 1]^{m \times n},
  \quad \boldsymbol{\Phi}^\top \Z  \geq 0.
  \label{eq: obj of matrix based continuous}
\end{flalign}

Firstly we introduce the following vector variables:
\begin{flalign}
& \z = \text{vec}(\Z) = [\Z_{11}, \ldots, \Z_{m1}, \ldots, \Z_{mn}]^\top \in \{ -1, +1 \}^{mn \times 1}, \nonumber \\
& \overline{\y} = \text{vec}(\overline{\Y})= [\overline{\Y}_{11}, \ldots, \overline{\Y}_{m1}, \ldots, \overline{\Y}_{mn}]^\top \in \{ -1, 0, +1 \}^{mn}, \nonumber\\
& \W = \beta \cdot \W_\X^\top \otimes \I_m + \gamma \cdot \I_n \otimes \W_\C\in \mathbb{R}^{mn \times mn}, \nonumber\\
& \mathbf{L} = \beta \cdot \mathbf{L}_\X^\top \otimes \I_m + \gamma \cdot \I_n \otimes \mathbf{L}_\C   \in \mathbb{R}^{mn \times mn}, \nonumber\\
& \overline{\boldsymbol{\Phi}} =  \boldsymbol{\Phi} \otimes \I_n, \nonumber
\end{flalign}
where $\otimes$ indicates the Kronecker product \cite{kronecker-product-1958}.
Then Problem (\ref{eq: obj of matrix based continuous}) can be transformed to its equivalent vector based formulation, as follows:
 \begin{flalign}
  \arg\min_\z & \quad f_2(\z) = -\overline{\y}^\top \z  + \z^\top \mathbf{L} \z,
    \label{eq: obj of vector based continuous}
  \\
 \text{s.t.}   & \quad \z\in [0, 1]^{mn \times 1},
  \quad \overline{\boldsymbol{\Phi}}^\top \z  \geq 0. \nonumber
\end{flalign}

\begin{lemma}
$\mathbf{L}$ is positive semi-definite (PSD).
\label{lemma: property of L}
\end{lemma}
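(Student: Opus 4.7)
The plan is to prove Lemma 1 by combining three standard facts: that normalized graph Laplacians are positive semi-definite, that the Kronecker product of two symmetric PSD matrices is PSD, and that a nonnegative linear combination of PSD matrices remains PSD.

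First, I would establish that each Laplacian component is PSD. Recall from Section 3.3 that $\mathbf{L}_\X = \I - \D_\X^{-1/2} \W_\X \D_\X^{-1/2}$ and $\mathbf{L}_\C = \I - \D_\C^{-1/2} \W_\C \D_\C^{-1/2}$ are symmetric normalized Laplacians built from nonnegative similarity matrices. It is a standard result in spectral graph theory that such normalized Laplacians satisfy $\mathbf{u}^\top \mathbf{L} \mathbf{u} \geq 0$ for all $\mathbf{u}$, which can be seen directly from the sum-of-squares representations in Eqs. (2) and (4). Hence both $\mathbf{L}_\X$ and $\mathbf{L}_\C$ are symmetric PSD.

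Next, I would invoke the Kronecker product property: if $\mathbf{A} \succeq 0$ and $\mathbf{B} \succeq 0$ are symmetric, then $\mathbf{A} \otimes \mathbf{B} \succeq 0$. This follows because the eigenvalues of $\mathbf{A} \otimes \mathbf{B}$ are the pairwise products $\lambda_i(\mathbf{A}) \lambda_j(\mathbf{B})$, all of which are nonnegative. Applying this with the identity matrices (which are trivially PSD), we conclude that both $\mathbf{L}_\X^\top \otimes \I_m$ and $\I_n \otimes \mathbf{L}_\C$ are PSD. Note that $\mathbf{L}_\X^\top = \mathbf{L}_\X$ since the normalized Laplacian is symmetric, so this term is unambiguously PSD.

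Finally, since $\beta, \gamma \geq 0$ are nonnegative trade-off parameters and the set of PSD matrices is a convex cone closed under nonnegative scaling and addition, the combination
\begin{equation*}
\mathbf{L} = \beta \cdot \mathbf{L}_\X^\top \otimes \I_m + \gamma \cdot \I_n \otimes \mathbf{L}_\C
\end{equation*}
is PSD. I do not anticipate any significant obstacle here; the proof is essentially a direct application of standard linear algebra facts. The only minor subtlety is making sure the symmetry of $\mathbf{L}_\X$ is explicitly noted so that the Kronecker product argument applies cleanly, and confirming the assumption $\beta, \gamma \geq 0$ from the model setup.
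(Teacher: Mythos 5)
Your proof is correct and follows essentially the same route as the paper's: the Kronecker product eigenvalue fact gives that $\mathbf{L}_\X^\top \otimes \I_m$ and $\I_n \otimes \mathbf{L}_\C$ are each PSD, and a nonnegative combination of PSD matrices is PSD. Your version is in fact slightly more careful than the paper's (which mistakenly says the eigenvalues of $\mathbf{L}_\X^\top \otimes \I_m$ are ``non-zero'' rather than nonnegative, and does not explicitly justify that the normalized Laplacians are PSD), but there is no substantive difference in approach.
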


\begin{proof}
Given two square matrix $\mathbf{A} \in \mathbb{R}^{n_1 \times n_1}$ and $\mathbf{B} \in \mathbb{R}^{n_2 \times n_2}$, their eigenvalues are denoted as $\lambda_1,\ldots,\lambda_{n_1}$ and $\mu_1,\ldots,\mu_{n_2}$.
According to the property of Kronecker product, the eigenvalues of $\mathbf{A} \otimes \textbf{B}$ are $\lambda_i \mu_j, i=1,\ldots,n_1; j = 1,\ldots,n_2$.
 $\mathbf{L}_\X^\top$ is PSD and $\mathbf{I}_m$ is positive definite (PD). Obviously all eigenvalues of $\mathbf{L}_\X^\top \otimes \mathbf{I}_m$ are non-zero values, so $\mathbf{L}_\X^\top \otimes \mathbf{I}_m$ is a PSD matrix. Similarly we can obtain that $\mathbf{I}_n \otimes \mathbf{L}_\C$ is also PSD. Finally, as $\mathbf{L}$ is the positive weighted linear combination of two PSD matrices, it is easy to conclude that $\mathbf{L}$ is a PSD matrix.
\end{proof}

\begin{proposition}
 Problem (\ref{eq: obj of matrix based continuous}) is convex.  
\label{proposition: proof of convexity}
\end{proposition}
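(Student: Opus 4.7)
The plan is to piggy-back on the vectorized reformulation that has already been set up in the excerpt, since once the problem is cast in the form $f_2(\z) = -\overline{\y}^\top \z + \z^\top \mathbf{L} \z$ subject to $\z \in [0,1]^{mn}$ and $\overline{\boldsymbol{\Phi}}^\top \z \geq 0$, convexity becomes a routine consequence of Lemma~\ref{lemma: property of L}. So the first step is to certify that the matrix problem and the vector problem are genuinely equivalent, and the second step is to invoke Lemma~\ref{lemma: property of L} plus standard facts about affine constraints.

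For the equivalence, I would rely on three standard Kronecker/vec identities: $\tr(\overline{\Y}^\top \Z) = \text{vec}(\overline{\Y})^\top \text{vec}(\Z)$, and $\text{vec}(\mathbf{A}\mathbf{B}\mathbf{C}) = (\mathbf{C}^\top \otimes \mathbf{A}) \text{vec}(\mathbf{B})$ combined with $\tr(\mathbf{A}^\top \mathbf{B}) = \text{vec}(\mathbf{A})^\top \text{vec}(\mathbf{B})$. The first identity immediately gives $\tr(\overline{\Y}^\top \Z) = \overline{\y}^\top \z$. Applying the second identity to $\tr(\Z \mathbf{L}_\X \Z^\top) = \text{vec}(\Z)^\top \text{vec}(\Z \mathbf{L}_\X)$ yields $\z^\top (\mathbf{L}_\X^\top \otimes \mathbf{I}_m) \z$, and similarly $\tr(\Z^\top \mathbf{L}_\C \Z) = \z^\top (\mathbf{I}_n \otimes \mathbf{L}_\C) \z$. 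Collecting these three terms exactly reproduces $f_2(\z)$, so $f_1(\Z) = f_2(\z)$ pointwise under the bijection $\Z \leftrightarrow \z$.

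Next, I would read off convexity of the objective as follows. The linear term $-\overline{\y}^\top \z$ is affine, hence convex. The quadratic term $\z^\top \mathbf{L} \z$ has Hessian $2 \mathbf{L}$, which is PSD by Lemma~\ref{lemma: property of L}, so it is convex. A sum of convex functions is convex, hence $f_2$ is convex. For the feasible set, the box $[0,1]^{mn}$ is the intersection of half-spaces and hence convex, while $\overline{\boldsymbol{\Phi}}^\top \z \geq 0$ is itself an intersection of half-spaces; the intersection of two convex sets is convex. Since the matrix-form constraints $\Z \in [0,1]^{m\times n}$ and $\boldsymbol{\Phi}^\top \Z \geq 0$ likewise correspond to an intersection of affine half-spaces on the entries of $\Z$, the original feasible set is convex too.

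Putting the two observations together, Problem~(\ref{eq: obj of matrix based continuous}) minimizes a convex function over a convex set, and is therefore a convex program. There is no real obstacle in this argument; the only point requiring care is bookkeeping the Kronecker identities in the vectorization step, where one must be consistent about column-stacking conventions to ensure $\text{vec}(\Z \mathbf{L}_\X) = (\mathbf{L}_\X^\top \otimes \mathbf{I}_m) \z$ is applied correctly.
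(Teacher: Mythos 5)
Your proof is correct and follows essentially the same route as the paper: vectorize the problem via Kronecker/vec identities, invoke Lemma~\ref{lemma: property of L} to get positive semi-definiteness of $\mathbf{L}$, and combine a convex objective with a feasible set cut out by box and linear inequality constraints. The only differences are cosmetic — you spell out the vec identities explicitly and correctly record the Hessian as $2\mathbf{L}$ rather than $\mathbf{L}$, neither of which changes the argument.
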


\begin{proof}
The Hessian of the objective function in (\ref{eq: obj of vector based continuous}) with respect to $\z$ is $\mathbf{L}$, which has been proven to be a PSD matrix in Lemma \ref{lemma: property of L}. Thus, the objective function (\ref{eq: obj of vector based continuous}) is a convex function in $\z$.
The box and linear inequality constraints lead to a convex feasible solution space (that satisfies Slater's condition), so it is easy to conclude that Problem (\ref{eq: obj of vector based continuous}) is a convex optimization problem. Finally, as Problems (\ref{eq: obj of matrix based continuous}) and (\ref{eq: obj of vector based continuous}) are equivalent, thus Problem (\ref{eq: obj of matrix based continuous}) is a convex problem.
\end{proof}

\bibliographystyle{spmpsci}      
\bibliography{bywu_bib}   

\end{document}